\newcommand\norm[1]{\left\lVert#1\right\rVert}
\newcommand{\Lapl}{\mathbf{\mathop{\mathcal{L}}}}
\newcommand{\Trans}[1]{{#1}^{\top}}
\newcommand{\Mat}[1]{\textbf{#1}}
\newcommand{\Space}[1]{\mathbb{#1}}
\newcommand{\Set}[1]{\mathcal{#1}}
\newcommand{\ie}{\emph{i.e., }}
\newcommand{\eg}{\emph{e.g., }}
\newcommand{\st}{\emph{s.t. }}
\newcommand{\wrt}{\emph{w.r.t. }}
\newcommand{\cf}{\emph{cf. }}
\newcommand{\aka}{\emph{aka. }}
\newcommand{\za}[1]{{\color{black}{#1}}}
\theoremstyle{plain}
\newtheorem{theorem}{Theorem}[section]
\newtheorem{lemma}{Lemma}[section]
\theoremstyle{definition}
\newtheorem{definition}{Definition}[section]
\theoremstyle{remark}
\begin{document}

\title{Differentiable Invariant Causal Discovery}

\author{Yu Wang,
        An Zhang,
        Xiang Wang,
        Yancheng Yuan,
        Xiangnan He,
        Tat-Seng Chua
\IEEEcompsocitemizethanks{
  \IEEEcompsocthanksitem Y. Wang, X. Wang and X. He are with University of Science and Technology of China. Email: yuw164@ucsd.edu, xiangwang1223@gmail.com, xiangnanhe@gmail.com.
  \IEEEcompsocthanksitem A. Zhang, T. Chua are with Sea-NExt Joint Lab from National University of Singapore. Email: an\_zhang@nus.edu.sg, dcscts@nus.edu.sg.
  \IEEEcompsocthanksitem Y. Yuan is with The Hong Kong Polytechnic University. Email: yancheng.yuan@polyu.edu.hk. 
  \IEEEcompsocthanksitem Correspondonce to A. Zhang.
  }
}

\markboth{}%
{Wang \MakeLowercase{\textit{et al.}}}
%



\IEEEtitleabstractindextext{%
\begin{abstract}
  Learning causal structure from observational data is a fundamental challenge in machine learning. 
However, the majority of commonly used differentiable causal discovery methods are non-identifiable, turning this problem into a continuous optimization task prone to data biases. 
In many real-life situations, data is collected from different environments, in which the functional relations remain consistent across environments, while the distribution of additive noises may vary.
This paper proposes \underline{D}ifferentiable \underline{I}nvariant \underline{C}ausal \underline{D}iscovery (\textbf{DICD}), utilizing the multi-environment information based on a differentiable framework to avoid learning spurious edges and wrong causal directions. 
Specifically, DICD aims to discover the environment-invariant causation while removing the environment-dependent correlation.
We further formulate the constraint that enforces the target structure equation model to maintain optimal across the environments.
Theoretical guarantees for the identifiability of proposed DICD are provided under mild conditions with enough environments.
Extensive experiments on synthetic and real-world datasets verify that DICD outperforms state-of-the-art causal discovery methods up to 36\% in SHD. 
Our code will be open-sourced.
\end{abstract}

\begin{IEEEkeywords}
Causal Discovery, Invariant Learning, Causal Structure Learning, Causal Graph Learning. 
\end{IEEEkeywords}}

\maketitle

\IEEEdisplaynontitleabstractindextext

\IEEEpeerreviewmaketitle

\section{Introduction}\label{introduction}


Causal discovery (CD) is a fundamental problem in a variety of tasks, such as understanding the generation process of data \cite{zheng2018dags}, and probing explainability of models \cite{CXPlain,PGM-Explainer}. It has tremendous impacts on various domains, like biology \cite{sachs2005causal,opgen2007correlation}, and finance \cite{sanford2012bayesian}.
CD aims to learn the causal structure among a set of variables from the observational data and represent the structure as a directed acyclic graph (DAG).
The acyclicity constraint frames CD as the combinatorial optimization of discrete edges, which however, is NP-hard.
Recently, leading CD solutions \cite{zheng2018dags,yu2019dag, zhu2019causal,LachapelleBDL20} gracefully convert the DAG learning into the continuous optimization task.
Specifically, the idea stemming from NOTEARS \cite{zheng2018dags} is to build a scoring function upon the adjacency matrix over the variables and find an equivalent continuous constraint on acyclicity.

NOTEARS~\cite{zheng2018dags} inspired the development of numerous differentiable causal discovery algorithms that use gradient descent to find the optimal causal graph~\cite{zheng2020learning,2021differentiable,yu2020dags}. When compared to traditional constraint-based causal discovery, these methods have demonstrated superior performance and efficiency in uncovering the true graph with a large amount of data.
However, most of them follow the paradigm of empirical risk minimization (ERM) \cite{ERM} --- first imposing the scored DAG on the observations to reconstruct, and then minimizing the empirical risks between the observational and reconstructed data, so as to optimize the DAG.
Despite the promising performance, we argue that ERM is prone to capture data biases or shortcut~\cite{IRM,GroupDRO,Rubi}, thus derailing the structure learning of DAG.
Specifically, the observations of variables are often marred by some spurious correlations, such as the annotator or selection biases in the data acquisition pipeline \cite{IRM}, thus posing undesired entanglements of variables.
ERM easily latches on these correlations \cite{IRM,REx,GroupDRO} to refine the DAG structure, as the following example illustrates.



Consider the ground-truth DAG in Table \ref{tab:example} as the target being reconstructed by the CD solutions, where each edge denotes a causal relationship between two variable nodes.
$X$ consists of three variables, which have different relationships with $Y$: $A$ determines $Y$, $B$ is influenced by $Y$, and $C$ is irrelevant to $Y$.
While the CD solutions are designed to identify the causation edges, ERM does not need to learn the correct DAG to reach a low reconstruct loss for fitting the observations.
For example, instead of looking at the true causation $B\leftarrow Y$, it is easy to capture the statistical shortcut edge $B\rightarrow Y$, since $B$ is strongly correlated with $Y$ and the reconstructed DAGs achieve even lower losses as compared to the ground-truth DAG.
This common problem is consistent with over-reconstruction \cite{He0SXLJ21} but remains largely unexplored.


\begin{table*}[!t]
    \centering
    \caption{Examples that NOTEARS would find the wrong causal graph while multi-environment settings can help identify the true graph. 
    We present the DAGs derived from minimal reconstruction loss in different environments. The graph on the left denotes the ground truth. The edges in red are wrong or spurious, while the dash one represents the corresponding coefficient is zero. We list the reconstruction loss value following with the triplets in brackets representing coefficients of $Y$ with $A,B$ and $C$. The loss values and parameters of the potential sub-optimal causal structures learned by NOTEARS in a single environment are highlighted in red. Detailed data generating processes are as follows. $X\sim \mathcal{N}(0, 1)$, $A = X + z_A(\sim \mathcal{N}(0, 1))$, $B = X + Y/2 + z_B^e(\sim \mathcal{N}(0, (\sigma_B^e)^2 )) $, $C=X/2 + z_C^e(\sim \mathcal{N}(0, (\sigma_C^e)^2 ))$, $Y=A/4 + \epsilon_Y(\sim N(0, 1))$. $\sigma_B^e$ and $\sigma_C^e$ varies across three environments, $e_1, e_2$ and $e_3$, with $(\sigma_B^e)^2 = (\sigma_C^e)^2 = 1, 2, 4$, respectively.
    }
    \label{tab:example}
    \begin{tabular}{r|r|rrr}
        \toprule
   \multirow{3}{*}{\begin{tikzpicture}[scale=1.00, every node/.style={scale=1.00}]
    \node[draw,circle] (b) at (0:0) {\textsf{B}};
    \node[draw,circle] (c) at (0:1.0) {\textsf{C}};
    \node[draw,circle] (a) at (180:1.0) {\textsf{A}};
    \node[draw,circle] (x) at (90:1.0) {\textsf{X}};
    \node[draw,circle] (y) at (270:1.0) {\textsf{Y}};
    \draw[-latex] (x) -- (a);
    \draw[-latex] (x) -- (b);
    \draw[-latex] (x) -- (c);
    \draw[-latex] (a) -- (y);
    \draw[-latex] (y) -- (b);
\end{tikzpicture}} &   & \multicolumn{1}{c}{\begin{tikzpicture}[scale=0.7, every node/.style={scale=0.7}]
    \node[draw,circle] (b) at (0:0) {\textsf{B}};
    \node[draw,circle] (c) at (0:1.0) {\textsf{C}};
    \node[draw,circle] (a) at (180:1.0) {\textsf{A}};
    \node[draw,circle] (x) at (90:1.0) {\textsf{X}};
    \node[draw,circle] (y) at (270:1.0) {\textsf{Y}};
    \draw[-latex] (x) -- (a);
    \draw[-latex] (x) -- (b);
    \draw[-latex] (x) -- (c);
    \draw[-latex, draw=red] (y) -- (a);
    \draw[-latex, draw=red] (b) -- (y);
    \draw[-latex, draw=red] (c) -- (y);
\end{tikzpicture}}
& \multicolumn{1}{c}{\begin{tikzpicture}[scale=0.7, every node/.style={scale=0.7}]
    \node[draw,circle] (b) at (0:0) {\textsf{B}};
    \node[draw,circle] (c) at (0:1.0) {\textsf{C}};
    \node[draw,circle] (a) at (180:1.0) {\textsf{A}};
    \node[draw,circle] (x) at (90:1.0) {\textsf{X}};
    \node[draw,circle] (y) at (270:1.0) {\textsf{Y}};
    \draw[-latex] (x) -- (a);
    \draw[-latex] (x) -- (b);
    \draw[-latex] (x) -- (c);
    \draw[-latex, draw=red] (b) -- (y);
    \draw[-latex, draw=red] (y) -- (a);
    \draw[-latex, draw=red, dashed] (y) -- (c);
\end{tikzpicture}}
& \multicolumn{1}{c}{\begin{tikzpicture}[scale=0.7, every node/.style={scale=0.7}]
    \node[draw,circle] (b) at (0:0) {\textsf{B}};
    \node[draw,circle] (c) at (0:1.0) {\textsf{C}};
    \node[draw,circle] (a) at (180:1.0) {\textsf{A}};
    \node[draw,circle] (x) at (90:1.0) {\textsf{X}};
    \node[draw,circle] (y) at (270:1.0) {\textsf{Y}};
    \draw[-latex] (x) -- (a);
    \draw[-latex] (x) -- (b);
    \draw[-latex] (x) -- (c);
    \draw[-latex, draw=red] (y) -- (a);
    \draw[-latex] (y) -- (b);
    \draw[-latex, draw=red, dashed] (y) -- (c);
\end{tikzpicture}} \\
 \cmidrule(lr){2-5}
& $e_1$ &  \cellcolor{pink} 4.57 (0.24, 1.14, -1.32) & 5.07 (0.24, 0.32, 0.00) & 5.07 (0.24, 0.50, 0.00) \\
& $e_2$ & \cellcolor{pink}  6.59 (0.24, 1.09, -1.19) & 7.14 (0.24, 0.23, 0.00) & 7.07 (0.24, 0.50, 0.00) \\
& $e_3$ &  \cellcolor{pink} 10.60 (0.24, 1.07, -1.12) &  11.21 (0.24, 0.15, 0.00) & 11.07 (0.24, 0.50, 0.00) \\
 \cmidrule(lr){2-5}
& & \multicolumn{1}{c}{\begin{tikzpicture}[scale=0.7, every node/.style={scale=0.7}]
    \node[draw,circle] (b) at (0:0) {\textsf{B}};
    \node[draw,circle] (c) at (0:1.0) {\textsf{C}};
    \node[draw,circle] (a) at (180:1.0) {\textsf{A}};
    \node[draw,circle] (x) at (90:1.0) {\textsf{X}};
    \node[draw,circle] (y) at (270:1.0) {\textsf{Y}};
    \draw[-latex] (x) -- (a);
    \draw[-latex] (x) -- (b);
    \draw[-latex] (x) -- (c);
    \draw[-latex, draw=red] (y) -- (a);
    \draw[-latex] (y) -- (b);
    \draw[-latex, draw=red] (c) -- (y);
\end{tikzpicture}}
& \multicolumn{1}{c}{\begin{tikzpicture}[scale=0.7, every node/.style={scale=0.7}]
    \node[draw,circle] (b) at (0:0) {\textsf{B}};
    \node[draw,circle] (c) at (0:1.0) {\textsf{C}};
    \node[draw,circle] (a) at (180:1.0) {\textsf{A}};
    \node[draw,circle] (x) at (90:1.0) {\textsf{X}};
    \node[draw,circle] (y) at (270:1.0) {\textsf{Y}};
    \draw[-latex] (x) -- (a);
    \draw[-latex] (x) -- (b);
    \draw[-latex] (x) -- (c);
    \draw[-latex] (a) -- (y);
    \draw[-latex, draw=red] (b) -- (y);
    \draw[-latex, draw=red] (c) -- (y);
\end{tikzpicture}}
& \multicolumn{1}{c}{\begin{tikzpicture}[scale=0.7, every node/.style={scale=0.7}]
    \node[draw,circle] (b) at (0:0) {\textsf{B}};
    \node[draw,circle] (c) at (0:1.0) {\textsf{C}};
    \node[draw,circle] (a) at (180:1.0) {\textsf{A}};
    \node[draw,circle] (x) at (90:1.0) {\textsf{X}};
    \node[draw,circle] (y) at (270:1.0) {\textsf{Y}};
    \draw[-latex] (x) -- (a);
    \draw[-latex] (x) -- (b);
    \draw[-latex] (x) -- (c);
    \draw[-latex] (a) -- (y);
    \draw[-latex, draw=red] (b) -- (y);
    \draw[-latex, draw=red, dashed] (y) -- (c);
\end{tikzpicture}} \\
\midrule
 5.00 (0.25, 0.50, 0.00) & $e_1$ & 5.05 (0.24, 0.50, 0.10) &  \cellcolor{pink} 4.57 (-0.23, 1.38, -1.54) & 5.12 (0.07, 0.29, 0.00) \\
7.00 (0.25, 0.50, 0.00) & $e_2$ & 7.06 (0.24, 0.50, 0.06) &\cellcolor{pink}6.59 (-0.24, 1.36, -1.44) & 7.17 (0.14, 0.18, 0.00) \\
11.00 (0.25, 0.50, 0.00) & $e_3$ & 11.06 (0.24, 0.50, 0.03) & \cellcolor{pink} 10.59 (-0.24, 1.35, -1.39) & 11.21 (0.18, 0.11, 0.00) \\
\bottomrule
\end{tabular}
\vspace{-8pt}
\end{table*}

In this study, we aim to design a paradigm that distinguishes the causation edges from spuriously-correlated edges and obtains the faithful DAG.
Although learning causal structure from observational data is challenging, we draw inspiration from invariant learning \cite{IRM,REx,GroupDRO} and approximate the task by searching the edges with \textit{invariant} structural equations in multi-environment settings.
Across different environments, only factual causation edges remain invariant, while edges with wrong causal directions hardly remain stable, according to the assumptions in \textit{invariant learning} \cite{IRM}. 
Though reminiscent of past ideas - \eg causal structure learning in multi-domain, from heterogeneous/nonstationary data~\cite{CausalInferenceIP,MultiDomain,CD-NOD,RegressionInvariance,SAEM,FOM,Dep} - these methods are only applicable in a linear system or restricted to conditional independence tests, which suffer from high computation complexity as the number of variables increases.
To achieve effective differentiable causal discovery in both linear and nonlinear scenarios,
we reconsider Table \ref{tab:example} and additionally exhibit the multi-environment information, which impacts the distributions of additive noises.
Applying ERM on these environments separately results in different functions when there exist spurious correlations in the causal graph. This inspires us to exclude such unstable edges towards a robust DAG across environments.

Towards this end, we propose \textbf{differentiable invariant causal discovery} (DICD), a novel scheme of DAG structure learning that incorporates the idea of invariant learning --- that is, learning an invariant DAG with the piece of environment information.
Specifically, a DAG generator module learns to generate the environment-aware DAGs from individual environments.
For each DAG, the structure equation model (SEM) \cite{pearl2016causal} describes the functions of learned causation edges.
We then exploit invariant risk minimization (IRM) \cite{IRM} to encourage the SEMs to be optimal across all environments, obtaining DAGs regardless of environment changes.
On synthetic and real-world datasets, extensive experiments demonstrate the effectiveness of DICD to surpass current state-of-the-art CD solutions.

Our main contributions are: 
\begin{itemize}[leftmargin=*]
    \item 
   To the best of our knowledge, we are among the first class to adopt the environment information into the differentiable causal discovery framework. 
    
    \item We propose a novel causal discovery solution, DICD, to incorporate invariant learning in both linear and nonlinear settings. Experimental results on synthetic and real-world datasets demonstrate that DICD could significantly better reveal true correlations and eliminate the spurious ones compared to prevalent methods.
    
    \item We provide theoretical guarantees for the identifiability of proposed DICD in linear systems under mild conditions, given certain assumptions about the environments.
    %
\end{itemize}

\section{Preliminary}
\label{preliminary}

We first make some necessary and reasonable assumptions. Then, we give the task formulation of causal discovery (CD). After that, we introduce the continuous optimization paradigm via empirical risk minimization (ERM), as well as the linear and nonlinear solutions. We list the notations mentioned in our paper in Appendix \ref{notations}

\vspace{5pt}
\noindent\textbf{Assumptions.} (i) The structural equations are invariant across different environments; (ii) \za{The system is causally sufficient; (iii) Observational data is generated from a structural equation model with independent additive noise;}
(iv) The distribution shift of additive noise appears in different environments.

The assumptions (i) - (iii) are crucial to our method and provide key insights on
how causal structure can be identified from observational data.
Assumption (i) states that the invariance principle is held across environments. 
Assumption (ii) ensures that no hidden confounders exist in the system. 
In other words, there is no systematic bias induced by hidden confounders. 
Assumption (iii) implies that only independent additive noise is considered in our paper, which is commonly used in causal discovery.
Assumption (iv) defines the multiple environments, which is consistent with the definition of environment assumed in \cite{MultiDomain, RegressionInvariance}  with heterogeneous data \cite{CD-NOD} or interventional data \cite{CausalInferenceIP}.
A formal definition of different environments is given in Definition \ref{definition_of_environment}.
Technically speaking, assumption (iv) presents both challenges and opportunities for causal discovery and provides a sufficient condition that the causal structure becomes identifiable.
A more formal Theorem \ref{theorem:identifiability} on this argument will be presented later in Section \ref{theoretical_analysis}.

\vspace{5pt}
\noindent \textbf{Task Formulation of CD.}
Let $\Mat{X}=\Trans{[\Mat{x}_{1}|\cdots|\Mat{x}_{n}]}\in\Space{R}^{n\times d}$ denote the $n$ observational data of $d$ variables, which are generated from a target directed acyclic graph (DAG).
The target DAG is $(\Set{V},\Set{D})$, where $\Set{V}$ represents the set of node variables, denoted as $\{X_1,\cdots, X_d\}$. And $\Set{D}$ is the set of cause-effect edges between variables. 
The observational data is assumed to be generated from the following SEM:
\begin{equation}\label{eq:data_generation}
    X_j = F_j(Pa(X_j)) + z_j, j\in\{1,\cdots,d\},
\end{equation}
    where $X_j$ is the $j$-th node variable, $F_j$ is the causal structure function, $Pa(X_j)$ is the set of the parents of $X_j$, and $z_j$ refers to the additive noise with variance $\sigma_j^2$.
Without loss of generality, we assume that the noises are zero-mean.
In real-life settings, since the dataset may be obtained from various environments, the distribution of additive noises $z_j$ may differ across different environments. The causal structure functions $F_j$, on the other hand, are generally invariant.
The goal of CD is to learn a DAG to reconstruct the observations $\Mat{X}$.
The acyclicity restriction of DAG is the fundamental obstacle, because it frames DAG learning as a NP-hard combinatorial optimization task.

\vspace{5pt}
\noindent \textbf{Common Paradigm of ERM.}
Popular differentiable CD solutions, \eg NOTEARS \cite{zheng2018dags} and its follow-up studies \cite{zheng2020learning,yu2020dags}, convert DAG learning into a continuous optimization process to overcome the obstacle of the combinatorial optimization problem.
The primary idea is to build a scoring function upon the adjacency matrix of variables and discover an equivalent continuous constraint on acyclicity.
To optimize the scoring function, they mostly adopt the paradigm of ERM to minimize the empirical risks between the observational and reconstructed data as: 
\begin{gather*}
    \min_f \Lapl(f) = \frac{1}{n}\sum_{i=1}^n l(\Mat{x}_i, f(\Mat{x}_i))~~\text{s.t.}~~\Set{G}(f) \in \text{DAG},
\end{gather*}
where $\Mat{x}_i$ is the $i$-th sample in the dataset. $l(\cdot,\cdot)$ is the reconstruction loss function, \ie squared loss or negative log-likelihood. 
$f=(f_{1},\cdots,f_{d})$ formulates the estimated structure function \cite{pearl2016causal} of variables, where $f_{i}:\Space{R}^{ d}\rightarrow\Space{R}$ is the estimated structure function of node variable $X_{i}$, and thus $f: \Space{R}^{d}\rightarrow \Space{R}^d$ is the structure function for all nodes. 
$f_{i}(X_{1},\cdots,X_{d})$ is dependent on $X_{j}$, if $X_{j}\in\text{Pa}(X_{i})$.
The score-based solutions seek to learn $f$ conditioning on the DAG constraint: $\Set{G}(f)\in\text{DAG}$, where $\Set{G}(f)$ refers to the graph corresponding to $f$.

Following prior studies \cite{zheng2018dags,zheng2020learning}, we use the matrix $\Mat{W}\in\Space{R}^{d\times d}$ to encode the graph $\Set{G}(f)$, where each element $[\Mat{W}]_{ij}\neq 0$ indicates the existence of edge $X_{i}\rightarrow X_{j}$.
Then we define the matrix $\Mat{W}$ as: 
\begin{gather}
    [\Mat{W}(f)]_{ij}:=|\partial_i f_j|, \label{differentiate_of_f}
\end{gather}
where $\partial_i f_j = \frac{\partial f_{j}(X_1,\cdots,X_d)}{\partial X_i}$ is $f_{j}$'s partial derivative \wrt $X_{i}$. 
Thus, $f_{j}$ does not depend on $X_{i}$ if and only if $|\partial_{i}f_{j}|=0$. 
With Equation \eqref{differentiate_of_f}, we can exploit linear and nonlinear functions $f$ to characterize acyclicity in linear and nonlinear SEMs, respectively.

\vspace{5pt}
\noindent \textbf{Linear SEM.}
In the case of linear SEM \cite{zheng2018dags}, 
we formulate $f(\Mat{X})$ as a linear matrix multiplication: $f(\Mat{X})=\Mat{X}\Mat{A}$ (with each instance $f(\Mat{x}_i) = \Trans{\Mat{A}}\Mat{x}_i$, $i\in\{1,\cdots. n\}$), where $\Mat{A}\in\Space{R}^{d\times d}$ denotes the coefficient matrix.
This formulation frames the acyclicity in Equation \eqref{differentiate_of_f} as:
\begin{equation}\label{definition_of_W_linear}
    [\Mat{W}(f)]_{ij}= |A_{ij}|.
\end{equation}
\vspace{5pt}
\noindent \textbf{Nonlinear SEM.}
In the case of nonlinear SEM \cite{zheng2020learning}, we define $f_{i}(\Mat{X})$ as a multilayer perceptron (MLP) with $h$ hidden layers and an activation function $\sigma$ as:
\begin{equation}
\label{eq:expansion_of_fi}
    f_{i}(\Mat{X})=\text{MLP}(\Mat{X};\Mat{A}_{i}^{(1)},\cdots,\Mat{A}_{i}^{(h)})=\sigma(\cdots\sigma(\Mat{X}\Mat{A}_{i}^{(1)})\cdots)\Mat{A}_{i}^{(h)},
\end{equation}
where $\Mat{A}_{i}^{(l)}\in\Space{R}^{m_{l-1}\times m_{l}}$ is the learnable weight matrix of the $l$-th hidden layer for the $i$-th node, $m_{l}$ is the number of hidden units in the $l$-th layer, and $m_{0}=d$.
According to \cite{zheng2020learning}, we could define $\Mat{W}_\theta(f)$ as 
\begin{equation}\label{definition_of_W_nonlinear}
    [\Mat{W}_\theta(f)]_{ij}=\norm{i\text{-th column}(\Mat{A}_{j}^{(1)})}_{2}.
\end{equation} 
Then the acyclicity constraint could be latched on $\Mat{W}_\theta(f)$, which serves as the replacement of the intractable $\Mat{W}(f)$ for nonlinear setting. 

\section{Methodology}
\label{methodology}

In this section, we first present differentiable invariant causal discovery (DICD) to conduct causal structural learning over multiple environments. 
Then, we give detailed formulations in both linear and nonlinear settings. 
\subsection{Differentiable Invariant Causal Discovery (DICD)}

Despite the great success, the ERM paradigm easily captures spurious correlations between variables by over-reconstructing the observations \cite{He0SXLJ21} (See the toy example in Table \ref{tab:example}).
It is essential to distinguish the causation edges from the spuriously-correlated edges.
Towards this end, we get access to the multi-environment information and incorporate the idea of invariant learning, so as to frame the causal discovery task as identifying environment-invariant causation edges and discarding environment-dependent correlations. 

First, we argue that the multi-environment, in various forms like explicit or implicit metadata, is common in many real-world datasets and can partition the data into different groups or domains.
The environments in LFW dataset \cite{huang2008labeled}, for instance, can divide the data into black-and-white and colorful photographs.
In ImageNet dataset \cite{DengDSLL009}, the data can be grouped by different sources and years of images.
Moreover, several benchmarks for the domain-shifts problems have been provided by WILDS \cite{koh2021wilds}.
Among them, Camelyon17 \cite{bandi2018detection} includes the images collected from five hospitals that serve as different environments, while Amazon \cite{ni2019justifying} provides review texts from different reviewers that can work as environments.
Towards the end, the formal definition of the different environments in our paper is given as follow:

\begin{definition} \label{definition_of_environment}
For two datasets generated from the same structure equation model as shown in Equation (\ref{eq:data_generation}). if there exists $i \in \{1,\cdots, d\}$ such that the distribution of $z_i$ is different across these datasets and $X_i$ is not the source node (see Definition \ref{definition_of_source_node}) in the corresponding graph. Then we say these two datasets are drawn from different environments. 
\end{definition}

\begin{definition}\label{definition_of_source_node}
    We define the node in the graph with no parents as the \emph{Source Node}.
\end{definition}

With the environment information, we utilize the invariant learning to conduct multi-environment causal discovery --- the function parameters of SEM from the target DAG should remain optimal across all the environments. 
Guided by this idea, our DICD consists of two modules: (1) Invariant structural model $\Mat{S}$, which presents the structure of DAG, \ie a binary adjacency matrix. By ``invariant'', we mean that $\Mat{S}$ should be consistent across environments; and (2) Optimal causal function $f$, which depicts the causal relation of variables. By ``optimal'', we mean that $f$ should be optimal over all the environments once the structural model $\Mat{S}$ is given.
Considering the example in Table \ref{tab:example} again, $\Mat{S}$ is the DAG structure, while $f$ refers to the coefficients of edges.
For an invariant DAG structure, completely various optimal coefficients will be learned in different environments.
As a result, by utilizing the environment information, incorrect DAGs with smaller reconstruction losses can be excluded based on S and $f$.

Having environment-aware groups of observations, we define the empirical risk within the environment $e \in \Set{E}$ as:
\begin{gather*}
    \Lapl^{e}(\Mat{S}\circ f)=\frac{1}{n_{e}}\sum_{i=1}^{n_e}l(\Mat{X}^{e}_{i},(\Mat{S}\circ f)(\Mat{X}^{e}_i)),
\end{gather*}
where 
$\circ$ refers to the composition of the two functions, and $n_e$ is the number of samples in environment $e$. 
We then build two constraints on $\Mat{S}$ and $f$ across all environments, and establish the model of DICD:
\begin{align}
    \min_{\Mat{S}\circ f}&~\sum_{e\in\Set{E}}\Lapl^{e}(\Mat{S}\circ f),\label{loss_function}\\
    \text{s.t.}&~~\Set{G}(\Mat{S})=\Set{G}(f)\in\text{DAG},\label{equal_graph_constraint}\\
    &~~f=\arg\min_{\bar{f}}\Lapl^{e}(\Mat{S}\circ\bar{f}),\quad\forall e\in\Set{E}.\label{optimal_constraint}
\end{align}
Equation \eqref{equal_graph_constraint} states the DAG constraint, where the DAGs represented by $\Mat{S}$ and $f$ are equivalent. In other words, $f$ latches on $\Mat{S}$'s DAG structure.
Equation \eqref{optimal_constraint} claims that $f$ refers to the optimal causal model fitting Equation \eqref{loss_function} across all environments.
Following NOTEARS \cite{zheng2020learning}, the DAG constraints can rewrite as hard DAG equation constraints:
\begin{gather}\label{hofw}
    \Set{G}(\Mat{S})=\Set{G}(f),\quad h(\Mat{W}(f))=0,
\end{gather}
where $\Mat{W}(f)$ is defined as Equation \eqref{differentiate_of_f} in the linear setting and should be replaced with $\Mat{W}_\theta(f)$ in Equation \eqref{definition_of_W_nonlinear} for the nonlinear setting. Besides $h(\Mat{W})=\text{tr}(e^{\Mat{W}\circ\Mat{W}})-d$. Here, $\circ$ refers to the Hadamard product (\aka the element-wise product), and tr($e^{\Mat{W}\circ\Mat{W}}$) is the trace of $e^{\Mat{W}\circ\Mat{W}}$. 

\subsection{Linear SEM}
When $f$ characterizes acyclicity in the linear SEM, we have $f(\Mat{X})=\Mat{X}\Mat{A}$ and $\partial_{i} f_{j}(\Mat{X}) = A_{ij}$ (\cf Equation \eqref{definition_of_W_linear}).
With the invariant structure model $\Mat{S}$ as the binary matrix, we can convert $\Mat{S}\circ f$ as $\Mat{S}\circ\Mat{A}$, which is the Hadamard product of $\Mat{S}$ and $\Mat{A}$.
However, the bilevel optimization formulation of the DICD model is difficult to solve and is susceptible to failure due to over-parametrization in our setting.
We further simplify the learning of these two matrices as the optimization of a new matrix $\Mat{A}_{\Mat{S}}=\Mat{S}\circ\Mat{A}\in\Space{R}^{d\times d}$.
As such, we restrict the loss function in Equation \eqref{loss_function} with the DAG constraint in Equation \eqref{equal_graph_constraint} as:
\begin{gather}
    \min_{\Mat{A}_{\Mat{S}}}\sum_{e\in\Set{E}}\Lapl^{e}(\Mat{A}_{\Mat{S}}),\quad\text{s.t.}~h(\Mat{A}_{\Mat{S}})=0. \nonumber
\end{gather}
This relaxed version only focuses on optimizing over $\Mat{A}_{\Mat{S}}$ during training. After learning $\Mat{A}_{\Mat{S}}$, we can simply reset coefficient matrix $\Mat{A}=\Mat{A}_{\Mat{S}}$ and let $\Mat{S}$ be the binary indicator on $\Mat{A}_{\Mat{S}}$'s elements.

We then explore the tractable formulation for the optimality constraint across environments in Equation \eqref{optimal_constraint} with the following theorem:
\begin{theorem}\label{prop_of_linear_optimality}
 In the linear setting, we define a matrix variable as $\Mat{B} \in \Space{R}^{d\times d}$. After replacing $\Mat{S} \circ \Mat{A}$ with $\Mat{A}_{\Mat{S}}$, the constraint Equation (\ref{equal_graph_constraint}-\ref{optimal_constraint}) satisfy the following necessary condition:
    \begin{gather}\label{optimal_constraint_final}
        \norm{\frac{\partial\Lapl^{e}(\Mat{A}_{\Mat{S}}\circ\Mat{B})}{\partial\Mat{B}}|_{\Mat{B}=\Mat{1}}}^{2}_{2}=0,\quad\forall e\in\Set{E},
    \end{gather}
    where $\textbf{1}$ is the all-one matrix.
\end{theorem}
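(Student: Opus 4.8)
The plan is to read the optimality constraint (\ref{optimal_constraint}) as a global optimality statement for the effective coefficient matrix $\Mat{A}_{\Mat{S}}$, and then to convert that global optimality into a first-order stationarity condition through an IRM-style reparametrization using the auxiliary variable $\Mat{B}$, exactly mirroring the dummy-predictor trick of \cite{IRM}. First I would rewrite (\ref{optimal_constraint}) in matrix form: in the linear SEM we have $\bar{f}(\Mat{X})=\Mat{X}\bar{\Mat{A}}$, so $\Mat{S}\circ\bar{f}$ acts through the coefficient matrix $\Mat{S}\circ\bar{\Mat{A}}$, and $\Lapl^{e}(\Mat{S}\circ\bar{f})$ depends on $\bar{\Mat{A}}$ only through $\Mat{S}\circ\bar{\Mat{A}}$. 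Hence, for each environment $e$, the constraint says that $\Mat{A}_{\Mat{S}}=\Mat{S}\circ\Mat{A}$ minimizes $\Lapl^{e}$ over the feasible set $\Set{F}=\{\Mat{S}\circ\bar{\Mat{A}}:\bar{\Mat{A}}\in\Space{R}^{d\times d}\}$, namely the matrices whose support is contained in that of $\Mat{S}$; the equal-graph constraint (\ref{equal_graph_constraint}) is what guarantees that this common support pattern is well defined after collapsing $\Mat{S}\circ\Mat{A}$ into $\Mat{A}_{\Mat{S}}$.

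The key step is to observe that the perturbed matrix $\Mat{A}_{\Mat{S}}\circ\Mat{B}$ never leaves $\Set{F}$. Since $\Mat{A}_{\Mat{S}}\circ\Mat{B}=(\Mat{S}\circ\Mat{A})\circ\Mat{B}=\Mat{S}\circ(\Mat{A}\circ\Mat{B})$, choosing $\bar{\Mat{A}}=\Mat{A}\circ\Mat{B}$ exhibits $\Mat{A}_{\Mat{S}}\circ\Mat{B}\in\Set{F}$ for every $\Mat{B}\in\Space{R}^{d\times d}$. Defining $g^{e}(\Mat{B}):=\Lapl^{e}(\Mat{A}_{\Mat{S}}\circ\Mat{B})$, the optimality of $\Mat{A}_{\Mat{S}}$ over $\Set{F}$ then forces $g^{e}(\Mat{B})\ge\Lapl^{e}(\Mat{A}_{\Mat{S}})=g^{e}(\Mat{1})$ for all $\Mat{B}$, because $\Mat{A}_{\Mat{S}}\circ\Mat{1}=\Mat{A}_{\Mat{S}}$. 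Thus $\Mat{B}=\Mat{1}$ is a global minimizer of $g^{e}$, and since $\Mat{B}$ ranges over the whole open set $\Space{R}^{d\times d}$, this minimizer is interior.

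From here I would invoke Fermat's stationarity condition: at an interior minimizer of a differentiable function the gradient vanishes, yielding $\partial g^{e}/\partial\Mat{B}|_{\Mat{B}=\Mat{1}}=\Mat{0}$, whose squared Frobenius norm is precisely the left-hand side of (\ref{optimal_constraint_final}). Differentiability of $g^{e}$ is routine once the setup is fixed, following from the chain rule applied to the smooth loss $l$ composed with the linear map $\Mat{X}\mapsto\Mat{X}(\Mat{A}_{\Mat{S}}\circ\Mat{B})$ and the (smooth, bilinear) Hadamard product in $\Mat{B}$, so I would not dwell on the explicit gradient computation.

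The main obstacle is conceptual rather than computational: one must be careful that (\ref{optimal_constraint_final}) is only a \emph{necessary} condition and not an equivalent reformulation of (\ref{optimal_constraint}). The reparametrization $\Mat{B}\mapsto\Mat{A}_{\Mat{S}}\circ\Mat{B}$ is in general not surjective onto $\Set{F}$, since it can only reach matrices supported on $\mathrm{supp}(\Mat{A}_{\Mat{S}})$, which may be strictly smaller than $\mathrm{supp}(\Mat{S})$ whenever some learned coefficients vanish. This does not affect the forward implication we need—a global minimizer over $\Set{F}$ is in particular stationary along every $\Mat{B}$-direction—but it does mean the vanishing-gradient penalty cannot be upgraded to a full characterization of the optimality constraint. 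I would therefore state the result only as a necessary condition and rely solely on the ``global minimum implies interior stationary point'' direction, which is exactly what the theorem asserts.
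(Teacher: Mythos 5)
Your proposal is correct and follows essentially the same route as the paper's own proof: both introduce the dummy Hadamard factor $\Mat{B}$, use $\Mat{A}_{\Mat{S}}\circ\Mat{B}=\Mat{S}\circ(\Mat{A}\circ\Mat{B})$ to show the perturbation stays feasible so that $\Mat{B}=\Mat{1}$ is a global (interior) minimizer of $\Lapl^{e}(\Mat{A}_{\Mat{S}}\circ\Mat{B})$, and then conclude by first-order stationarity. Your added remark that the reparametrization is not surjective onto the feasible set (so the penalty is only a necessary condition) is a useful clarification the paper leaves implicit, but it does not change the argument.
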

\begin{proof}
    The basic optimality constraint across environments for Equation (\ref{equal_graph_constraint}-\ref{optimal_constraint}) can be described as:
    \begin{align}\label{optimal_A}
        \Mat{A}^{*}=\arg\min_{\Mat{A}}\Lapl^{e}(\Mat{S}\circ\Mat{A}), \quad\forall e\in\Set{E}, \nonumber\\
        \text{s.t.}~\Set{G}(\Mat{S})=\Set{G}(\Mat{A})\in\text{DAG}.
    \end{align}
    Then we introduce a new matrix variable $\Mat{B}$ and insert it into Equation \eqref{optimal_A}. The optimal of $\Mat{B}$ can be defined as:
    \begin{gather} \label{replace_B}
        \Mat{B}^{*}=\arg\min_{\Mat{B}}\Lapl^{e}(\Mat{S}\circ\Mat{A}^{*}\circ\Mat{B}), \quad \forall e\in \mathcal{E}.
    \end{gather}
    We assert $\Mat{B}^{*}$ could be the all-one matrix $\Mat{1}$, which indicates:
    \begin{gather}\label{optimal_B_is_E}
        \Lapl^{e}(\Mat{S}\circ\Mat{A}^{*}\circ\Mat{1})\leq\Lapl^{e}(\Mat{S}\circ\Mat{A}^{*}\circ\Mat{B}),\quad\forall\Mat{B}\in\Space{R}^{D\times D}, \forall e\in \mathcal{E}.
    \end{gather}
    Assuming there exists $\Mat{B}'$ that satisfies $\displaystyle \Lapl^{e}(\Mat{S}\circ\Mat{A}^{*}\circ\Mat{B}')<\Lapl^{e}(\Mat{S}\circ\Mat{A}^{*}\circ\Mat{1})$, then replacing $\Mat{A}^{*}\circ\Mat{B}'$ with $\Mat{A}'$, we could have $\displaystyle \Lapl^{e}(\Mat{S}\circ\Mat{A}')<\Lapl^{e}(\Mat{S}\circ\Mat{A}^{*})$. 
    This result obviously contradicts with Equation \eqref{optimal_A}. 
    As such, Equation \eqref{optimal_B_is_E} holds. 
    
    We replace $\Mat{S}\circ\Mat{A}$ with $\Mat{A}_{\Mat{S}}$. Now if $\Mat{A}$ is the optimal parameter of Equation \eqref{optimal_A} (\ie $\Mat{A}^*$), then $\Mat{A}_{\Mat{S}}$ becomes $\Mat{A}_{\Mat{S}}^*$, which yield the following equation: 
    \begin{equation*}
        \Mat{1}=\arg\min_{\Mat{B}}\Lapl^{e}(\Mat{A}_{\Mat{S}}^*\circ\Mat{B}), \quad \forall e\in \mathcal{E}. 
    \end{equation*} 
    According to the first-order optimality condition, we have: $\displaystyle \frac{\partial \mathcal{L}^e(\Mat{A}_S^* \circ \Mat{A})}{\partial \Mat{B}}\big|_{\Mat{B} = \Mat{1}} = \Mat{0}$. 
    This concludes the proof.
\end{proof}

Based on Theorem \ref{prop_of_linear_optimality}, we could find that the intractable optimality constraint across environments is now differentiable with the objective Equation \eqref{optimal_constraint_final}. 

We establish the DICD model in linear cases as follows:
\begin{align}
    \min_{\Mat{A}_{\Mat{S}}} & \sum_{e\in\Set{E}}\Lapl^{e}(\Mat{A}_{\Mat{S}}) + \lambda\sum_{e\in\Set{E}}\norm{\frac{\partial\Lapl^{e}(\Mat{A}_{\Mat{S}}\circ\Mat{B})}{\partial\Mat{B}}|_{\Mat{B}=\Mat{1}}}^{2}_{2}, \nonumber \\
    \text{s.t. } & h(\Mat{A}_{\Mat{S}})=0. \label{objective_linear}
\end{align}

This objective function potentially incorporates the invariant structure and the optimal coefficients into the single variable $\Mat{A}_{\Mat{S}}$, thus only adds one more penalty for training compared with NOTEARS.

\subsection{Nonlinear SEM}
In nonlinear settings, We shall continue to concentrate on the scenario with scalar-valued variables (\ie $\textbf{X}\in\mathbb{R}^{n\times d}$) for simplicity. However, the vector-valued variables (\ie $\textbf{X} \in \mathbb{R}^{n\times d\times d_x}$) could be simply incorporated into DICD.

NOTEARS-MLP~\cite{zheng2020learning} deploys the continuous DAG constraint for nonlinear SEM to the first layer of all the \textrm{MLP}s, as shown in Equation \eqref{definition_of_W_nonlinear}. 
Inspired by this, we establish the relationship between the structure matrix $\Mat{S}$ and the first layer of the \textrm{MLP}s. In other words, ensuring the parameters of \textrm{MLP} latch on the structure $\Mat{S}$ in order to satisfy $\mathcal{G}(\Mat{S}) = \mathcal{G}(f)$ constraint. 
The precise solution to the equation $\mathcal{G}(\Mat{S}) = \mathcal{G}(f)$ is to maintain the following restriction throughout the optimization process:
\begin{equation}\label{eq:nonlinear_constraint}
    \forall i,j, \quad \Mat{S}_{ji}=0 \Longrightarrow ||j\textrm{-th column}(\Mat{A}_i^{(1)})||_2 = 0,
\end{equation}
where $\Mat{A}_i^{(1)}$ is the matrix parameter of the first layer in the $i$-th \textrm{MLP}. The intuition behind the above equation is that setting $j\textrm{-th column}(A_i^{(1)})$ to zero would block the correlation from $j$ to $i$, corresponding to $\Mat{S}_{ji} = 0$. 

Though the first term in constraint Equation \eqref{hofw} could be formulated in Equation \eqref{eq:nonlinear_constraint}. The optimization of Equation (\ref{loss_function}-\ref{optimal_constraint}) is still intractable. We need to construct an differentiable term to replace the constraint Equation \eqref{eq:nonlinear_constraint}. 
To address this problem, we propose to simplify the optimization of $\Mat{S}$ and $f$ to optimizing a new function $f_{\Mat{S}} =  (f_{\Mat{S}1},\cdots,f_{\Mat{S}d})$, where $f_{\Mat{S}i} =  \Mat{S}\circ f_i$. 
The exact correlation is described in the following equation: 
\begin{align}
    f_{\Mat{S}i} &= \Mat{S}\circ f_i = \textrm{MLP}(\Trans{Re([\Mat{S}]_i, m_1)} \circ \Mat{A}_i^{(1)}, \Mat{A}_i^{(2)}, \cdots, \Mat{A}_i^{(h)})
    \nonumber \\
    &= \textrm{MLP}(\Mat{A}_{\Mat{S}_i}^{(1)}, \Mat{A}_i^{(2)}, \cdots, \Mat{A}_i^{(h)}), \nonumber
\end{align}
where $m_1$ comes from $\Mat{A}_i^{(1)} \in \mathbb{R}^{m_{0}\times m_1}$, and $\Trans{Re([\Mat{S}]_i, m_1)}$ refers to $\Trans{[\underbrace{[\Mat{S}]_i,\cdots,[\Mat{S}]_i}_{m_1\textrm{ times}}]}$, with $[\Mat{S}]_i$ being the $i$-th column of $\Mat{S}$.
We denote $Re([\Mat{S}]_i, m_1) \circ \Mat{A}_i^{(1)}$ as $\Mat{A}_{\Mat{S}i}^{(1)}$. 
Similar to the linear setting, once $f_{\Mat{S}}$ is learned, we can simply set $f = f_{\Mat{S}}$ and $\Mat{S}$ as the adjacency matrix of $\mathcal{G}(f)$.
Then $f$ and $\Mat{S}$ will share the same graph structure. 
Now we propose the following theorem for nonlinear setting:
\begin{theorem}\label{prop_of_nonlinear_optimality}
    In nonlinear system, we denote $\Mat{A}_i^{(l)}, l\in \{1,\cdots, d\}$ as the parameter of the $l$-th layer of \textrm{MLP} $f_i$, and $\Mat{A}_{\Mat{S}i}$ is the parameter of the 1st layer of \textrm{MLP} $f_{\Mat{S}i}$, $i \in \{1, \cdots, d\}$. Then after replacing $Re([\Mat{S}]_i, m_1) \circ \Mat{A}_i^{(1)}$ with $\Mat{A}_{\Mat{S}i}^{(1)}$, we define a matrix variable as $\Mat{B} \in \Space{R}^{m_1\times d}$, the optimal condition Equations (\ref{equal_graph_constraint}-\ref{optimal_constraint}) except for the DAG part satisfy the following necessary condition: 
    \begin{align}
       \sum_{i=1}^d \Big|\Big| &\frac{\partial \mathcal{L}^e(\textrm{MLP}(\Mat{A}_{\Mat{S}i}^{(1)}\circ\Mat{B}, \Mat{A}_i^{(2)}, \cdots, \Mat{A}_i^{(n)}))}{\partial \Mat{B}}\Big|_{\Mat{B}=\textbf{1}}  \Big|\Big|_2^2 = 0, \quad \nonumber \\
       &\forall e \in \mathcal{E}.  \label{optimal_constraint_nonlinear_final}
    \end{align}
\end{theorem}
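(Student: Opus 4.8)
The plan is to transplant the argument of Theorem \ref{prop_of_linear_optimality} to the MLP parametrization, the only change being that the multiplicative auxiliary variable $\Mat{B}$ now acts on the structure-masked first-layer weights $\Mat{A}_{\Mat{S}i}^{(1)}$ instead of on a single coefficient matrix. First I would write out the across-environment optimality requirement inherited from the constraints \eqref{equal_graph_constraint}--\eqref{optimal_constraint}: once $\Mat{S}$ is fixed, the learned parameters $\{\Mat{A}_i^{(l)}\}$ --- equivalently, the masked first layers $\Mat{A}_{\Mat{S}i}^{(1)}$ together with the deeper layers $\Mat{A}_i^{(2)},\cdots,\Mat{A}_i^{(h)}$ --- must jointly minimize $\Lapl^{e}$ for every environment $e\in\Set{E}$, subject to the structural restriction \eqref{eq:nonlinear_constraint} and acyclicity.

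Next I would introduce the perturbation variable $\Mat{B}$ and replace each first layer by the Hadamard product $\Mat{A}_{\Mat{S}i}^{(1)}\circ\Mat{B}$. The crucial observation, playing the role of ``reparametrization preserves support'' from the linear proof, is that multiplying $\Mat{A}_{\Mat{S}i}^{(1)}$ elementwise by any $\Mat{B}$ leaves every zero column of $\Mat{A}_{\Mat{S}i}^{(1)}$ zero; hence every choice of $\Mat{B}$ yields first-layer weights that still satisfy \eqref{eq:nonlinear_constraint} and induce the same (or a sub-) graph, so acyclicity is untouched and the perturbed network stays feasible. I would then argue, exactly as in the linear case, that $\Mat{B}=\Mat{1}$ minimizes $\Lapl^{e}(\text{MLP}(\Mat{A}_{\Mat{S}i}^{(1)}\circ\Mat{B},\Mat{A}_i^{(2)},\cdots,\Mat{A}_i^{(h)}))$: if some $\Mat{B}'$ strictly decreased the loss, absorbing $\Mat{B}'$ into the first-layer weights would produce a feasible parameter set strictly better than the assumed optimum, a contradiction.

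Finally, since $\Mat{B}=\Mat{1}$ is an interior minimizer over the unconstrained variable $\Mat{B}$, I would invoke first-order stationarity to conclude that the gradient of $\Lapl^{e}$ with respect to $\Mat{B}$, evaluated at $\Mat{B}=\Mat{1}$, vanishes for each MLP index $i$ and each $e$. Because each such gradient is the zero matrix, so is its squared Euclidean norm, and summing over $i=1,\cdots,d$ yields the stated necessary condition \eqref{optimal_constraint_nonlinear_final}.

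The main obstacle, and the step I would treat most carefully, is the feasibility and expressiveness of a first-layer-only perturbation. Unlike the linear setting, where $\Mat{A}^{*}\circ\Mat{B}$ sweeps out every matrix sharing the support of $\Mat{A}^{*}$, perturbing only the first layer of each MLP does not realize all feasible reparametrizations of the whole network. This is harmless for the direction we need, because the contradiction argument only requires that first-layer multiplicative perturbations remain in the feasible set and that the learned network is already jointly optimal; consequently no such perturbation can improve the loss. I would stress that the resulting condition is therefore \emph{necessary but not sufficient} for across-environment optimality, mirroring the linear statement, and that the interior location of $\Mat{B}=\Mat{1}$ is precisely what promotes the stationarity inequality to the clean equality in \eqref{optimal_constraint_nonlinear_final}.
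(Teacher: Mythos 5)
Your proposal is correct and follows essentially the same route as the paper's proof: introduce the multiplicative perturbation $\Mat{B}$ on the masked first layer, show $\Mat{B}=\Mat{1}$ is optimal by the absorption/contradiction argument inherited from the linear case, and invoke first-order stationarity at the interior point $\Mat{B}=\Mat{1}$ to obtain the vanishing gradients and hence the zero sum of squared norms. Your explicit verification that Hadamard perturbations preserve the support constraint \eqref{eq:nonlinear_constraint} (so the perturbed network stays feasible) is a point the paper leaves implicit, and is a welcome tightening rather than a departure.
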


\begin{proof}
The basic optimal condition described in Equation (\ref{equal_graph_constraint}-\ref{optimal_constraint}) without the DAG constraint could be written as:

\begin{align}
    \Mat{A}_{i}^{(1)*} &= \textrm{arg}\min_{A_i^{(1)}} \mathcal{L}^e(\textrm{MLP}(Re([\Mat{S}]_i, m_1)\circ\Mat{A}_i^{(1)},\cdots, \Mat{A}_i^{(n)})), \nonumber \\
    \forall e &\in \mathcal{E}, \quad \textrm{s.t.} \,\, \mathcal{G}(\Mat{S}) = \mathcal{G}(f), \label{optimal_A_nonlinear} 
\end{align}
where $Re([\Mat{S}_i], m_1)$ means $\Trans{[\underbrace{[\Mat{S}]_i,\cdots,[\Mat{S}]_i}_{m_1\textrm{ times}}]}$, with $[\Mat{S}]_i$ being the $i$-th column of $\Mat{S}$. 
Note that the following equations with $e$ in this proof indicate all the environments, we omit the condition $\forall e \in \mathcal{E}$ in the following to simplify the proof.

\noindent Similar to the proof of Proposition \ref{prop_of_linear_optimality}, we define a new matrix $\Mat{B} \in \mathbb{R}^{m_1\times d}$, and insert this term into the above equation, then we can have:
\begin{equation}
\label{eq:nonlinear_optimality_constraint}
\Mat{B}^* = \textrm{arg}\min_{B} \mathcal{L}^e(\textrm{MLP}(Re([\Mat{S}]_i, m_1)\circ\Mat{A}_{i}^{(1)*}\circ\Mat{B}, \cdots, \Mat{A}_i^{(n)})).
\end{equation}

\noindent We argue that $\Mat{B}^*$ could be $\Mat{1}$, a matrix of dimensions $m_1\times d$ filled with ones. This is expressed as:
\begin{equation}
\Mat{1} = \textrm{arg}\min_{B} \mathcal{L}^e(\textrm{MLP}(Re([\Mat{S}]_i, m_1)\circ\Mat{A}_{i}^{(1)*}\circ\Mat{B}, \cdots, \Mat{A}_i^{(n)})). \nonumber
\end{equation}

\noindent Then replacing $Re([\Mat{S}]_i, m_1)\circ\Mat{A}_{i}^{(1)*}$ with $\Mat{A}_{\Mat{S}i}^*$ would yield:
\begin{equation}
    \Mat{1} = \textrm{arg}\min_{B} \mathcal{L}^e(\textrm{MLP}(\Mat{A}_{\Mat{S}i}^*\circ\Mat{B}, \Mat{A}_i^{(2)},\cdots, \Mat{A}_i^{(n)})). \nonumber
\end{equation}
Again, by the first-order optimality condition, we have Equation (\ref{optimal_constraint_nonlinear_final}).
\end{proof}

With Theorem \ref{prop_of_nonlinear_optimality}, we could transform the intractable constraint in Equation \eqref{optimal_constraint} into the differentiable term subject to the first layer of MLP. The detailed formulation in the nonlinear setting can be expressed as:
\begin{align}
    & \min_{f_{\Mat{S}}} \sum_{e\in \mathcal{E}} L^e(f_{\Mat{S}}) \nonumber \\
    & + \lambda \sum_{e\in\mathcal{E}} \sum_{i=1}^d\Big|\Big|\frac{\partial \mathcal{L}^e(\textrm{MLP}(\Mat{A}_{\Mat{S}i}\circ\Mat{B}, \Mat{A}_i^{(2)}, \cdots, \Mat{A}_i^{(n)}))}{\partial \Mat{B}}\Big|_{\Mat{B}=\textbf{1}}\Big|\Big|_2^2, \nonumber \\
    & \textrm{ s.t. } h(\Mat{W}_\theta(f_{\Mat{S}})) = 0. \label{objective_nonlinear}
\end{align}
This objective function also potentially uses $f_{\Mat{S}}$ to both incorporate $f$ and the invariant structure matrix $\Mat{S}$. 



\section{Theoretical Analysis}
\label{theoretical_analysis}
In this section, we aim to provide the sufficient conditions for identifiability of DICD in the linear SEM systems. We will leave the discussion about which assumptions can or cannot be
further relaxed in future work.
\begin{theorem}\label{theorem:identifiability}
For linear SEMs systems with Gaussian additive noises as in Equation \eqref{eq:data_generation}, if for any $X_i \in \Set{V}$ that is not the source node, there exist two environments $e_1, e_2\in\mathcal{E}$, such that:
\begin{align}
    & Var(z_i^{e_1}) \neq Var(z_i^{e_2}), \label{eq:same_noise}\\
    & \forall X_j \in \Set{V}\backslash\{X_i\}, Var(z_j^{e_1}) = Var(z_j^{e_2}), \label{eq:different_noise}
\end{align}
where $Var(\cdot)$ represents the variance, $z_j^{e_1}$ and $z_j^{e_2}$ are the additive noise from $Pa(X_j)$ to $X_j$ in $e_1$ and $e_2$, respectively. Then the causal structure is \textbf{identifiable}. 
\end{theorem}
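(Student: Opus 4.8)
The plan is to show that the DICD feasible set in the linear Gaussian regime---those DAGs whose structural coefficients are simultaneously least-squares--optimal in every environment---contains only the true graph $G^{*}$. First I would record the least-squares characterization: in a linear Gaussian SEM the within-environment optimizer $f$ of Theorem~\ref{prop_of_linear_optimality} is, column by column, the population regression of each node on the parent set dictated by $\Mat{S}$, and the optimality-across-environments constraint \eqref{optimal_constraint} therefore demands that these regression coefficients coincide for all $e\in\Set{E}$. Thus identifiability reduces to proving that the only support $\Mat{S}$ (subject to the DAG constraint) admitting environment-invariant regression coefficients is the support of $A^{*}$.

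Next I would verify feasibility of the truth. Writing each true equation as $X_j=\sum_i A^{*}_{ij}X_i+z_j$ with $z_j\perp Pa(X_j)$ and the $A^{*}_{ij}$ invariant by Assumption~(i), the population regression of $X_j$ on its true parents returns exactly $A^{*}_{\cdot j}$ in every environment, irrespective of the noise variances. Hence $G^{*}$ satisfies \eqref{optimal_constraint} and lies in the feasible set; it remains to prove uniqueness.

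For uniqueness I would pass to second moments, which by Gaussianity carry all the information. The SEM gives $\Sigma^{e}=(I-A^{*})^{-\top}D^{e}(I-A^{*})^{-1}$ with $D^{e}=\mathrm{diag}(Var(z_1^{e}),\dots,Var(z_d^{e}))$. For the environment pair $(e_1,e_2)$ that assumptions \eqref{eq:same_noise}--\eqref{eq:different_noise} attach to a non-source node $X_i$, the increment $\Sigma^{e_2}-\Sigma^{e_1}$ is rank one, equal to $\big(Var(z_i^{e_2})-Var(z_i^{e_1})\big)\,u_iu_i^{\top}$ with $u_i=(I-A^{*})^{-\top}e_i$, whose support is exactly $\{X_i\}\cup\mathrm{desc}(X_i)$. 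Now suppose a competitor DAG $\hat G$ with fixed coefficients $\hat A$ is also feasible, so that $\Sigma^{e}=(I-\hat A)^{-\top}\hat D^{e}(I-\hat A)^{-1}$ for diagonal $\hat D^{e}$. Subtracting across the same pair forces $(I-\hat A)^{-\top}(\hat D^{e_2}-\hat D^{e_1})(I-\hat A)^{-1}$ to equal that rank-one matrix; since $\hat D^{e_2}-\hat D^{e_1}$ is diagonal and the product is rank one, exactly one diagonal entry may move, so each such increment---one per non-source node---identifies a single ``moved'' node of $\hat G$ together with the direction $\hat u=(I-\hat A)^{-\top}e_{k}$ for some index $k$. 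Matching the supports and one-dimensional directions of these increments across all non-source nodes forces the descendant sets of $\hat G$ to coincide with those of $G^{*}$, so the topological order and skeleton agree; feeding this back into the invariant least-squares equations then forces $\hat A=A^{*}$.

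The hard part will be this last rigidity step: proving that the full family of rank-one covariance increments cannot be reproduced by a genuinely different factorization $(I-\hat A,\hat D)$. Concretely, I must rule out reparametrizations that reroute an edge yet leave every $\Sigma^{e}$ unchanged, which requires a non-degeneracy (faithfulness-type) argument guaranteeing that each $u_i$ has support exactly equal to $\{X_i\}\cup\mathrm{desc}(X_i)$---so that generic coefficients make the increment supports informative---together with a separate argument placing the source node correctly even though it carries no distinguishing pair (it is the unique node lying outside every other node's change-support except as a common ancestor). Gaussianity is used only to collapse everything to these second-moment identities; the combinatorial heart is showing that the collection of descendant supports, one per non-source node, determines the DAG uniquely.
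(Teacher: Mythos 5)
Your reduction of constraint \eqref{optimal_constraint} to environment-invariant population regression coefficients, and your feasibility check for the true graph, match the paper (the latter is its Lemma~\ref{lemma: optimal_parameter_is_true_parameter}). But your uniqueness argument has a genuine gap at its very first move: from feasibility of a competitor $\hat G$ you write $\Sigma^{e}=(I-\hat A)^{-\top}\hat D^{e}(I-\hat A)^{-1}$ with $\hat D^{e}$ diagonal, and that does not follow. Feasibility only says that, for each node, the least-squares coefficients onto its $\hat G$-parents coincide across environments; it does not say the joint distribution factorizes as a linear Gaussian SEM over $\hat G$. Each node's residual is orthogonal to that node's own regressors, but residuals of \emph{different} nodes are in general correlated under a misspecified graph, so $(I-\hat A)^{\top}\Sigma^{e}(I-\hat A)$ need not be diagonal. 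Your rank-one-increment analysis therefore only excludes competitors that reproduce every $\Sigma^{e}$ exactly as a Gaussian SEM on $\hat G$ --- a strictly smaller class than the class of stable graphs (Definition~\ref{definition:stable_graph}) that the theorem must handle. Relatedly, your target claim that the feasible set is a singleton is stronger than the theorem needs and is not what the paper establishes: stable graphs other than the truth are not ruled out; instead the paper shows (Lemma~\ref{lemma:stable_graph}) that the environment pair attached to each non-source $X_i$ forbids direction-violating parents in any stable graph --- by localizing to a single coefficient $\hat w_{j_0}$ in the explicit formula \eqref{formulation_of_optimal_w}, whose numerator terms are environment-invariant while the denominator $\mathbb{E}_e[X_{j_0}^2]$ shifts --- and then completes identifiability with a per-node loss comparison (Theorem~\ref{theorem_of_justification}) showing the truth attains weakly smaller reconstruction loss than any stable graph, with equality forcing $\Mat{W}_s=\Mat{W}_0$. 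You have no analogue of this loss-comparison step, and without it stable-but-wrong graphs (for instance, graphs omitting true edges, which need not reproduce $\Sigma^{e}$ at all) are never excluded.

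Even within the restricted class of exact-SEM competitors, the ``rigidity'' step you flag as the hard part is left unproven and would require assumptions the theorem does not grant: the identity $\mathrm{supp}(u_i)=\{X_i\}\cup\mathrm{desc}(X_i)$ holds only generically, since path-coefficient cancellations can shrink the support, so your support-matching argument needs a faithfulness-type hypothesis absent from the statement; the placement of source nodes is a further unhandled case, as you note. The paper's route avoids all of this because it never inspects supports of $(I-A^{*})^{-\top}e_i$ --- the contradiction is extracted from one scalar regression coefficient whose parent-noise $z_{j_0}$ is, by the iterative choice of $j_0$, independent of the remaining parents of $X_i$. If you want to salvage your covariance-increment strategy, you would need both to enlarge it to cover all stable graphs (effectively re-deriving something like the paper's loss comparison) and to add an explicit non-degeneracy assumption, at which point it proves a different, more classical statement about identifiability of Gaussian SEMs from interventional second moments rather than Theorem~\ref{theorem:identifiability} as formulated.
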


Here we emphasize that the graph and the coefficients that satisfy the following sufficient conditions and Equation (\ref{loss_function}-\ref{optimal_constraint}) will be exactly the true causal graph and the true coefficients, corresponding to the identifiable graph. 

Theorem \ref{theorem:identifiability} indicates that our DICD is guaranteed to retrieve the true causal graph in linear systems, when the diversity of environments is adequate. 

The assumptions in Theorem \ref{theorem:identifiability} are restrictive mainly because Equation \eqref{eq:different_noise}) requires that the variances of the nodes other than $X_{i}$ to be the same across environments $e_1$ and $e_2$.
In our proof, Equation \eqref{eq:different_noise} is only used for proving Lemma \ref{lemma:stable_graph}. However, we will show that, even if we don't assume Equation \eqref{eq:different_noise}, we may still obtain Equation \ref{eq:contradiction_from_wj0} in the proof of Lemma \ref{lemma:stable_graph} below, which leads to the proof of Lemma \ref{lemma:stable_graph}. 

With Equation \eqref{formulation_of_optimal_w}, we know the sufficient and necessary condition of $\hat{w}_{j_0}^{e_1} = \hat{w}_{j_0}^{e_2}$ is:
\begin{align*}
    & \frac{\mathbb{E}_{e_1}[X_iX_{j_0}] - \sum_{k\in Pa_s(i)\backslash\{j_0\}} \hat{w}^e_k \mathbb{E}_{e_1}[X_k X_{j_0}] }{\mathbb{E}_{e_1}[X_{j_0}^2]} \\
    &= \frac{\mathbb{E}_{e_2}[X_iX_{j_0}] - \sum_{k\in Pa_s(i)\backslash\{j_0\}} \hat{w}^e_k \mathbb{E}_{e_2}[X_k X_{j_0}] }{\mathbb{E}_{e_2}[X_{j_0}^2]}.
\end{align*}
Since we do not have Equation \eqref{eq:different_noise}), every single term in the above equation is not necessarily equivalent. The probability of the combinations of all these non-equivalent terms being equivalent is very small. Thus during the implementation, Lemma \ref{lemma:stable_graph} is easy to be true. Then the other parts of the proof of Theorem \ref{theorem:identifiability} will remain the same.

Next, we provide the proof skeleton of Theorem \ref{theorem:identifiability} and leave the full proof in Appendix \ref{proof_of_identifiability}. 

There are three main steps in our proof. First, we prove that if the causal structure is correct, then the optimal coefficients are the ground truth coefficients (Lemma \ref{lemma: optimal_parameter_is_true_parameter}). Second, we show that the stable graph (see definition \ref{definition:stable_graph}) with the minimal sum of reconstruction loss from all environments is exactly the true graph (Theorem \ref{theorem_of_justification}). In the end, we can prove that our algorithm could yield the ground truth graph and coefficients with adequate environments. 

To begin with, we give the following lemma to show that our DICD could yield the true coefficient in the linear SEM under the true causal structure. 

\begin{lemma}
\label{lemma: optimal_parameter_is_true_parameter}
Given the true graph $G_0$, the corresponding structure $\Mat{S}_0$ is the adjacency matrix of $G_0$. $\forall e \in \mathcal{E}$, we denote the optimal parameters for the coefficients in environment $e$ as:
    \begin{equation}\label{regression_result_for_e}
        \hat{\Mat{W}}^e = arg\min_{\overline{\Mat{W}}} \mathcal{L}^e(\Mat{S}_0\circ \overline{\Mat{W}}), \quad \st \mathcal{G}(\Mat{S}_0) = \mathcal{G}(\overline{\Mat{W}}).
    \end{equation}
    Then we have $\hat{\Mat{W}}^e = \Mat{W}_0, \forall e \in \mathcal{E}$, where $\Mat{W}_0$ is the ground truth coefficients. 
\end{lemma}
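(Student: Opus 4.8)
The plan is to exploit the column-separability of the squared reconstruction loss in the linear model and reduce the problem to a family of population least-squares regressions, one per node, each of which is solved exactly by the true structural coefficients because a node's additive noise is uncorrelated with its parents. Throughout I would argue at the population level (infinite-data limit), i.e. with the population risk $\mathbb{E}_e[\cdot]$, which is the regime in which the later steps of the identifiability proof operate.

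First I would observe that, under $f(\mathbf{X})=\mathbf{X}\mathbf{A}$, the reconstruction of $X_j$ uses only the $j$-th column of $\mathbf{A}$, so the loss decomposes as $\mathcal{L}^e(\mathbf{S}_0\circ\overline{\mathbf{W}})=\sum_{j=1}^d \mathbb{E}_e\big[(X_j-\sum_{i\in Pa(X_j)}\overline{W}_{ij}X_i)^2\big]$, where the support constraint $\mathcal{G}(\mathbf{S}_0)=\mathcal{G}(\overline{\mathbf{W}})$ confines the free entries of column $j$ to the true parents of $X_j$. Hence the minimization over $\overline{\mathbf{W}}$ splits into $d$ independent regressions, and it suffices to show that for each $j$ the minimizer of the $j$-th term equals the true coefficient vector $((W_0)_{ij})_{i\in Pa(X_j)}$.

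Next, the crucial structural fact I would establish is that in every environment $e$ the noise $z_j$ is uncorrelated with each parent $X_i$, $i\in Pa(X_j)$. This follows from acyclicity together with the causal-sufficiency and independent-additive-noise assumptions: unrolling the SEM recursively, each parent $X_i$ is a function of the noises of $X_i$ and its ancestors, none of which can be $z_j$ since $X_j$ is a strict descendant of $X_i$ in the DAG; mutual independence of the noises then gives $\mathbb{E}_e[X_i z_j]=0$. Substituting $X_j=\sum_{i\in Pa(X_j)}(W_0)_{ij}X_i+z_j$ into the $j$-th term and writing $\Delta_{ij}=(W_0)_{ij}-\overline{W}_{ij}$, the cross term between $\sum_i\Delta_{ij}X_i$ and $z_j$ vanishes, leaving $\mathbb{E}_e\big[(\sum_i\Delta_{ij}X_i)^2\big]+\mathrm{Var}(z_j^e)\ge \mathrm{Var}(z_j^e)$, with equality exactly when $\sum_i\Delta_{ij}X_i=0$ almost surely. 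Equivalently, the true coefficients satisfy the first-order (normal) equations, since the residual $z_j$ is orthogonal to the regressors.

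Finally I would argue uniqueness: because every noise variance is positive, the joint covariance of $\mathbf{X}$ in each environment is positive definite, so the parents' Gram matrix $(\mathbb{E}_e[X_iX_{i'}])_{i,i'\in Pa(X_j)}$ is nonsingular; this forces $\sum_i\Delta_{ij}X_i=0$ to hold only when all $\Delta_{ij}=0$, pinning down the minimizer to $\mathbf{W}_0$. Since the orthogonality $\mathbb{E}_e[X_i z_j]=0$ holds identically across environments and only the residual magnitude $\mathrm{Var}(z_j^e)$ varies with $e$, the same true coefficients minimize the loss in every $e$, giving $\hat{\mathbf{W}}^e=\mathbf{W}_0$ for all $e\in\mathcal{E}$. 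I expect the main obstacle to be the careful justification of the noise-parent orthogonality $\mathbb{E}_e[X_i z_j]=0$ through the recursive DAG expansion (rather than the routine least-squares algebra), with a secondary point being the non-degeneracy of the parents' Gram matrix needed to ensure uniqueness of the minimizer.
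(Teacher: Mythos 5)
Your proposal is correct, and it shares the skeleton of the paper's own proof: both work at the population level, both use the column-separability of the squared loss under the support constraint $\mathcal{G}(\Mat{S}_0)=\mathcal{G}(\overline{\Mat{W}})$ to reduce to one regression per node, and both hinge on the orthogonality of $z_m$ to the parents of $X_m$. Where you diverge is in the finishing mechanism. The paper writes down a closed-form least-squares estimator, substitutes the generative equation $X_m = \Trans{[\Mat{W}_0]}_m \Mat{X}_{\Mat{S}_0 m} + z_m$ into it, drops the noise term by independence, and cancels to reach $[\hat{\Mat{W}}^e]_m = [\Mat{W}_0]_m$ as in Equation \eqref{regression_right}; you never form the estimator at all, instead completing the square to obtain $\mathcal{L} = \mathbb{E}_e\bigl[(\sum_i \Delta_{ij}X_i)^2\bigr] + \mathrm{Var}(z_j^e)$, so that $\Mat{W}_0$ visibly attains the minimum, and then proving the minimizer is unique via positive definiteness of the covariance of $\Mat{X}$ (hence of the parents' Gram matrix). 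Your variant is slightly longer but repairs exactly the two points the paper leaves informal: (i) the paper's displayed estimator inverts $\Mat{X}_{\Mat{S}_0 m}\Trans{(\Mat{X}_{\Mat{S}_0 m})}$, a rank-one matrix, with the expectation misplaced outside the inverse --- read charitably it is the population normal equations $\bigl(\mathbb{E}_e[\Mat{X}_{\Mat{S}_0 m}\Trans{\Mat{X}_{\Mat{S}_0 m}}]\bigr)^{-1}\mathbb{E}_e[\Mat{X}_{\Mat{S}_0 m} X_m]$, whose well-definedness is precisely what your Gram-matrix nondegeneracy argument supplies, and without which the paper's $\arg\min$ is not even guaranteed to be a single point; (ii) the paper merely asserts that $z_m$ "can be removed" by independence, whereas you derive $\mathbb{E}_e[X_i z_j]=0$ for $i\in Pa(j)$ by recursively unrolling the acyclic SEM under causal sufficiency. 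Both arguments then yield environment-uniformity of $\hat{\Mat{W}}^e$ for free, since the orthogonality holds in every $e$ and only $\mathrm{Var}(z_j^e)$ varies.
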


The following definitions are necessary for the rest of this section.

\begin{definition}
\label{definition:stable_graph}
For a given graph $G$, if there exist $ \Mat{S}, \Mat{W}$ such that:
    \begin{gather} 
        \mathcal{G}(\Mat{S}) = \mathcal{G}(\Mat{W}) = G, \nonumber \\
        \Mat{W} = arg\min_{\overline{\Mat{W}}} \mathcal{L}^e(\Mat{S}\circ \overline{\Mat{W}}), \quad \forall e\in \mathcal{E}, \label{regression}
    \end{gather}
    then we call $G$ a stable graph.
\end{definition}

\begin{definition}
For two nodes $X_1$ and $X_2$ in a DAG, if $X_2$ is reachable from $X_1$, then $X_1$ is a predecessor of $X_2$. We denote all the predecessors of $X_2$ in graph $G_0$ as $Pre_0(X_2)$. 
\end{definition}

With Definition \ref{definition:stable_graph}, we further denote the parents of the variable $X_i$ in $G_0$ as $Pa_0(X_i)$ and the corresponding indexes as $Pa_0(i)$. 
We propose the following Lemma \ref{lemma:stable_graph} to demonstrate that the causal directions between any two variables cannot violate each other in any stable graph and the true causal graph, which could serve as the pre-conditions in Theorem \ref{theorem_of_justification} to prove it. 

\begin{lemma}\label{lemma:stable_graph}
For any given stable graph $G_s$, if we assume the conditions in Theorem \ref{theorem:identifiability} hold, then $\forall X_i \in \Set{V}$ which is not a source node, we have $X_i\notin Pre_0(X_j)$ for any $X_j \in Pa_s(X_i)$. 
\end{lemma}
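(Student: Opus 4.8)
The plan is to prove Lemma~\ref{lemma:stable_graph} by contradiction, exploiting the variance-shift assumption of Theorem~\ref{theorem:identifiability} to show that the ``reversed'' coefficient cannot stay optimal across the two special environments $e_1, e_2$. Suppose, contrary to the claim, that there is a non-source node $X_i$ and some $X_j \in Pa_s(X_i)$ (so in the stable graph $G_s$ the edge points $X_j \to X_i$) with $X_i \in Pre_0(X_j)$ (so in the true graph $X_i$ is a genuine predecessor of $X_j$). The strategy is to write down the first-order optimality (normal) equations that the stable-graph coefficients $\hat{\Mat{W}}^e$ must satisfy in each environment $e$, in particular the equation determining the regression coefficient $\hat{w}_{j_0}^e$ of $X_i$ on its stable-graph parent $X_{j_0} := X_j$. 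Because $G_s$ is a \emph{stable graph} (Definition~\ref{definition:stable_graph}), these coefficients are forced to coincide across environments: $\hat{w}_{j_0}^{e_1} = \hat{w}_{j_0}^{e_2}$. The excerpt already records (via Equation~\eqref{formulation_of_optimal_w}) the closed form of this optimal coefficient as a ratio of expectations; equating the two environments' ratios gives the displayed identity.

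The core of the argument is then to show this forced equality is \emph{impossible} under the assumptions~\eqref{eq:same_noise}--\eqref{eq:different_noise}. First I would use the true-graph structure to expand each relevant second moment $\mathbb{E}_e[X_i X_{j_0}]$, $\mathbb{E}_e[X_k X_{j_0}]$, $\mathbb{E}_e[X_{j_0}^2]$ in terms of the ground-truth coefficients $\Mat{W}_0$ and the per-environment noise variances $\{Var(z_\ell^e)\}_\ell$. Here the key fact is that because $X_i$ is a true predecessor of $X_{j_0}$, the noise $z_i$ propagates downstream and appears in the moments involving $X_{j_0}$; consequently the term $Var(z_i^e)$ enters these moments with a nonzero multiplier determined by the path coefficients along the directed path $X_i \rightsquigarrow X_{j_0}$ in $G_0$. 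Under assumption~\eqref{eq:different_noise}, every other noise variance $Var(z_\ell^e)$ with $\ell \neq i$ is held fixed between $e_1$ and $e_2$, so the \emph{only} quantity that changes across the two environments is $Var(z_i^{e_1}) \neq Var(z_i^{e_2})$ from~\eqref{eq:same_noise}. Substituting into the ratio identity and cancelling the common (environment-invariant) parts, I expect the equation $\hat{w}_{j_0}^{e_1} = \hat{w}_{j_0}^{e_2}$ to reduce to a relation that can hold only if the coefficient of $Var(z_i^e)$ vanishes, which in turn forces the relevant path coefficient to be zero --- contradicting $X_i \in Pre_0(X_{j_0})$. This is the instantiation of Equation~\eqref{eq:contradiction_from_wj0} referenced in the surrounding text.

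The main obstacle will be the bookkeeping in the second-moment expansion: I need a clean way to isolate exactly how $Var(z_i^e)$ appears in $\mathbb{E}_e[X_i X_{j_0}]$, $\mathbb{E}_e[X_k X_{j_0}]$ for $k \in Pa_s(i)\setminus\{j_0\}$, and $\mathbb{E}_e[X_{j_0}^2]$, while verifying that all the \emph{other} moments are genuinely environment-invariant. The right tool is the reduced-form representation $\Mat{X} = \Mat{z}(\Mat{I}-\Mat{W}_0)^{-1}$, which expresses every variable as a linear combination of the independent noises; then each second moment becomes $\sum_\ell (\text{path factor}) \cdot Var(z_\ell^e)$, and the path factors are fixed structural constants independent of the environment. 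With this representation the invariance of all $\ell \neq i$ terms under~\eqref{eq:different_noise} is transparent, and the coefficient of the single varying term $Var(z_i^e)$ in the difference equation is a concrete product of $\Mat{W}_0$-entries that is nonzero precisely because $X_i$ lies on a directed path to $X_{j_0}$. A secondary subtlety is handling the summation over $k \in Pa_s(i)\setminus\{j_0\}$: I must argue that substituting the (also environment-invariant, by the stable-graph property) coefficients $\hat{w}_k^e$ does not accidentally reintroduce a matching $Var(z_i^e)$-dependence that cancels the contradiction; I would address this by noting that the stable-graph parents of $X_i$ are fixed and their regression coefficients are equal across $e_1,e_2$, so the entire varying contribution is collected into a single nonzero multiple of $Var(z_i^{e_1}) - Var(z_i^{e_2}) \neq 0$, completing the contradiction.
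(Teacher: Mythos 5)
Your overall template (argue by contradiction, use the closed-form coefficient \eqref{formulation_of_optimal_w}, compare it across the two special environments of Theorem~\ref{theorem:identifiability}) matches the paper's, but you instantiate the theorem's hypothesis at the wrong node, and this leaves a genuine gap. You shift $Var(z_i)$ --- the noise of the \emph{child} $X_i$ --- so the varying variance enters both the numerator and the denominator of \eqref{formulation_of_optimal_w}. Writing the reduced form $X_m=\sum_\ell \pi_{\ell\to m}\, z_\ell$, the numerator and denominator become $N^e=N_0+n\,Var(z_i^e)$ and $D^e=D_0+d\,Var(z_i^e)$ with environment-independent structural constants $N_0,n,D_0,d$, and the stability requirement $\hat{w}_{j_0}^{e_1}=\hat{w}_{j_0}^{e_2}$ cross-multiplies to
\begin{equation*}
    \bigl(N_0\,d-n\,D_0\bigr)\,\bigl(Var(z_i^{e_1})-Var(z_i^{e_2})\bigr)=0,
\end{equation*}
i.e.\ to $N_0\,d=n\,D_0$. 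Contrary to your claim, this does \emph{not} force the relevant path coefficient to vanish: it can hold with $n\neq 0$ and $d\neq 0$ whenever $N_0/D_0=n/d$ (equivalently, the fitted combination $\sum_{m\in Pa_s(i)}\hat{w}_m X_m$ loads on $z_i$ with weight exactly one), a codimension-one but perfectly admissible configuration under \eqref{eq:same_noise}--\eqref{eq:different_noise}. Your argument therefore rules out the reversed edge only generically --- essentially the ``probability is very small'' regime the paper itself invokes only when discussing what happens if Equation \eqref{eq:different_noise} is \emph{dropped} --- not for all models satisfying the stated hypotheses. A second soft spot: your assertion that the path factor ``is nonzero precisely because $X_i$ lies on a directed path to $X_{j_0}$'' is false in general, since total path weights $\pi_{i\to j_0}$ are sums over directed paths and can cancel.

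The paper avoids both problems by shifting the variance of $z_{j_0}$ --- the noise at the stable-graph \emph{parent} (a true descendant of $X_i$) --- and, crucially, by first re-choosing $j_0$: you set $j_0:=j$ outright, whereas the paper iterates ($j_0\leftarrow k$ whenever some $X_k\in Pa_s(X_i)$ is reachable from $X_{j_0}$ in $G_0$) until no stable parent of $X_i$ is a $G_0$-descendant of $X_{j_0}$, while preserving $X_i\in Pre_0(X_{j_0})$. This delivers $z_{j_0}\perp X_i$ and $z_{j_0}\perp X_k$ for all $k\in Pa_s(i)\backslash\{j_0\}$ (Equation \eqref{independence_j0}), so the numerator of \eqref{formulation_of_optimal_w} is \emph{exactly} environment-invariant ($n=0$ in the notation above), while the denominator $\mathbb{E}_e[X_{j_0}^2]$ carries $Var(z_{j_0}^e)$ with coefficient exactly $1$ (the trivial self-path, not a possibly-cancelling path weight), so it provably differs across $e_1,e_2$. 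That yields \eqref{eq:contradiction_from_wj0} with no genericity assumption (modulo the implicit edge case $\hat{w}_{j_0}=0$, which the paper also glosses). To repair your proposal, replace the shifted noise $z_i$ by $z_{j_0}$ and add the descendant-free selection of $j_0$; the rest of your reduced-form bookkeeping then goes through and reproduces the paper's argument.
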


\noindent Finally, Theorem \ref{theorem:identifiability} can be implied by the following theorem: 

\begin{theorem}
\label{theorem_of_justification}
For any given stable graph $G_s$ and the ground truth graph $G_0$, we denote their corresponding structures as $\Mat{S}_0$ and $\Mat{S}_s$, respectively. We further denote their corresponding consistent optimal parameters as $\Mat{W}_s$ and $\Mat{W}_0$, respectively. Then we have: 
    \begin{equation}\label{loss_comparison}
        \sum_{e\in\mathcal{E}}\mathcal{L}^e(\Mat{S}_0\circ \Mat{W}_0) \leq \sum_{e\in\mathcal{E}}\mathcal{L}^e(\Mat{S}_s\circ \Mat{W}_s),
    \end{equation}
    and the equation holds only for $\Mat{W}_0 = \Mat{W}_s$.
\end{theorem}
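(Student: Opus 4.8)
The plan is to argue in the population (expected-loss) limit, where for any coefficient matrix $\Mat{A}$ the linear reconstruction loss splits over nodes, $\mathcal{L}^e(\Mat{A})=\sum_{i=1}^d\mathbb{E}_e[(X_i-\sum_k A_{ki}X_i)^2]$, so the $i$-th summand depends only on the incoming edges of $X_i$, i.e. on $Pa_s(X_i)$ and the $i$-th column of the coefficient matrix. For the true graph, Lemma \ref{lemma: optimal_parameter_is_true_parameter} already tells us that $G_0$ is a stable graph whose consistent optimal coefficients are $\Mat{W}_0$, and substituting $\Mat{W}_0$ into the SEM \eqref{eq:data_generation} makes each residual exactly the corresponding noise, so $\sum_{e}\mathcal{L}^e(\Mat{S}_0\circ\Mat{W}_0)=\sum_e\sum_i \mathrm{Var}(z_i^e)$. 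Thus \eqref{loss_comparison} reduces to the node-wise claim that, for every node $X_i$ and every environment $e$, the stable graph's residual variance $\mathbb{E}_e[(X_i-\sum_{j\in Pa_s(X_i)}(\Mat{W}_s)_{ji}X_j)^2]$ is at least $\mathrm{Var}(z_i^e)$, with equality forcing the coefficients to coincide.

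The key step is to turn the global stability hypothesis into a usable structural property through Lemma \ref{lemma:stable_graph}: for every non-source node $X_i$ and every $G_s$-parent $X_j\in Pa_s(X_i)$ we have $X_i\notin Pre_0(X_j)$, i.e. each $G_s$-parent of $X_i$ is a \emph{non-descendant} of $X_i$ in $G_0$. Because the additive noises are independent, $z_i$ is independent of every non-descendant of $X_i$, hence independent of the entire set $Pa_s(X_i)$. Writing $X_i=\mu_i+z_i$ with $\mu_i=\sum_{k\in Pa_0(X_i)}(\Mat{W}_0)_{ki}X_k$ a function of ancestors (so $z_i$ is also independent of $\mu_i$), the best linear predictor of $X_i$ from $Pa_s(X_i)$ leaves residual $(\mu_i-\mathrm{proj}_{Pa_s(X_i)}\mu_i)+z_i$, whose two terms are independent, giving residual variance $=\mathrm{Var}(\mu_i-\mathrm{proj}\,\mu_i)+\mathrm{Var}(z_i^e)\ge\mathrm{Var}(z_i^e)$. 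Summing over all nodes and environments yields \eqref{loss_comparison}. Source nodes of $G_0$ fall outside the Lemma, but they are handled by the same mechanism: there $\mu_i=0$, and a descendant-parent in $G_s$ would force an environment-dependent regression coefficient (contradicting the environment-invariant optimality required by Definition \ref{definition:stable_graph}), so their $G_s$-parents are again non-descendants and the contribution is exactly $\mathrm{Var}(z_i^e)$.

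For the equality case, tightness requires $\mathrm{Var}(\mu_i-\mathrm{proj}_{Pa_s(X_i)}\mu_i)=0$ in \emph{every} environment, i.e. $\mu_i$ lies in the linear span of $Pa_s(X_i)$; the $G_s$-residual for $X_i$ is then exactly $z_i$, and $\sum_{j\in Pa_s(X_i)}(\Mat{W}_s)_{ji}X_j=\mu_i=\sum_{k\in Pa_0(X_i)}(\Mat{W}_0)_{ki}X_k$ as random variables. Since the noises have nonzero variance the joint covariance is positive definite, so the observed variables admit no exact linear dependence and the representation of $\mu_i$ is unique; matching coefficients column by column gives $\Mat{W}_s=\Mat{W}_0$ (and therefore $G_s=G_0$), which is the claimed equality condition.

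I expect the main obstacle to be the faithful use of Lemma \ref{lemma:stable_graph}: it is precisely the non-reversal property it supplies that excludes the ``descendant/spurious parent'' edges which would otherwise drive a node's residual \emph{below} its noise variance, the exact failure mode illustrated in Table \ref{tab:example}. The whole comparison hinges on that lemma rather than on any direct computation, and the secondary subtlety is the clean treatment of $G_0$-source nodes, which require the short stability argument above before the node-wise bound can be summed uniformly over all nodes.
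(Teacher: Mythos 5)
Your proof is correct and reaches the same conclusion, but the core of your argument is genuinely different from the paper's. Both proofs share the same skeleton: decompose the loss node-wise and environment-wise, use Lemma \ref{lemma:stable_graph} to conclude that every $G_s$-parent of a non-source $X_i$ is a $G_0$-non-descendant so that $z_i \perp Pa_s(X_i)$, and note via Lemma \ref{lemma: optimal_parameter_is_true_parameter} that the $G_0$ loss per node is exactly $\sigma_i^2$. Where you diverge is the per-node comparison: the paper partitions the relevant indexes into $\Set{I}_u = Pa_0(i)\cap Pa_s(i)$, $\Set{I}_v = Pa_0(i)\setminus Pa_s(i)$, $\Set{I}_k = Pa_s(i)\setminus Pa_0(i)$ and runs an iterative leaf-peeling argument — repeatedly extracting a node $X_{end}$ with no successors in the remaining set, whose independent noise contributes a separable variance term — to show $L_i^s \geq L_i^0$ and to read off the equality conditions ($\Set{I}_v=\emptyset$, coefficients matching $\Mat{W}_0$ on $\Set{I}_u$, vanishing on $\Set{I}_k$). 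You instead write $X_i=\mu_i+z_i$ and use the orthogonal-projection decomposition, getting the bound $\mathrm{Var}(\mu_i-\mathrm{proj}\,\mu_i)+\mathrm{Var}(z_i^e)\geq\mathrm{Var}(z_i^e)$ in one step, and you handle equality by uniqueness of linear representations under a positive-definite covariance (which holds since the acyclic SEM gives $\mathrm{Cov}(\Mat{X})=(I-\Trans{\Mat{W}_0})^{-1}\mathrm{diag}(\sigma_j^2)(I-\Mat{W}_0)^{-1}$, positive definite when all $\sigma_j^2>0$ — an assumption the paper also needs implicitly for its strict-inequality step in the $\Set{I}_v$ case). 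Your route buys brevity and avoids the combinatorial bookkeeping; the paper's route buys an explicit exhibition of which structural discrepancies ($\Set{I}_v\neq\emptyset$ versus extra parents in $\Set{I}_k$) cause strict loss increase versus forced coefficient values. One genuine point in your favor: you noticed that Lemma \ref{lemma:stable_graph} covers only non-source nodes and supplied a stability argument for $G_0$-source nodes (a descendant-parent would yield an environment-dependent regression coefficient, so $G_s$-parents of a source node must be $G_0$-non-descendants, hence independent of $X_i=z_i$), whereas the paper applies the lemma's conclusion to all nodes and disposes of source nodes with an unproven parenthetical remark that their columns are zero; your sketch, while at the same level of rigor as the paper's own lemma proof (both silently assume the relevant numerator in the regression coefficient is nonzero, which stability's requirement $\mathcal{G}(\Mat{W}_s)=G_s$ justifies), actually closes a gap the paper leaves open.
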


The proof of Theorem \ref{theorem_of_justification} is 
given in Appendix \ref{proof_of_identifiability}. Theorem \ref{theorem_of_justification} imples that, with the conditions in Equation \eqref{equal_graph_constraint} and Equation \eqref{optimal_constraint}, The ground truth graph will be yielded by Equation \eqref{loss_function}.
\section{Experiments}
In this section, we study the empirical performance of our proposed method. We aim to answer the following research questions: 
\begin{itemize}[leftmargin=*]
    \item \textbf{RQ1}: How does DICD perform compared to the previous methods in both linear and nonlinear settings? 
    \item \textbf{RQ2}: How do DICD and other baselines perform with various factors (\ie the number of environments, density of graph). 
    \item \textbf{RQ3}: How does DICD perform on real-world datasets compared with other applicable baselines? 
\end{itemize}


\subsection{Experimental Settings}
\label{experimental_settings}
We now provide the detailed settings for our experiments. The descriptions for the synthetic datasets and the real-world dataset are in Section \ref{subsub:synthetic_datasets} and Section \ref{subsub:realworld_dataset}, respectively. For all the experiments in this paper, we all generate 10 datasets for each graph setting and report the mean and standard deviation.


\subsubsection{Baselines}
\label{subsub:baselines}
We select four state-of-the-art causal discovery methods for comparison:
\begin{itemize}[leftmargin=*]
\vspace{5pt}
\item CD-NOD~\cite{CD-NOD} is a constrained-based causal discovery method designed for heterogeneous datasets, \ie datasets from different environments. CD-NOD utilizes the independent changes across environments to determine the causal orientations, and proposes constrained-based and kernel-based methods to find the causal structure.
\vspace{5pt}
\item NOTEARS~\cite{zheng2018dags} is specifically designed for linear setting, and is also the backbone of DICD in linear cases. NOTEARS estimates the true causal graph by minimizing the reconstruction loss with the continuous acyclicity constraint. We re-implement NOTEARS with replacing the L-BFGS-B iteration with Adam gradient descent, which could yield compatible performance and more importantly, could be deployed on GPU. 
\vspace{5pt}
\item NOTEARS-MLP~\cite{zheng2020learning} is specifically designed for nonlinear setting, which also serves as the foundation of DICD in nonlinear situations. NOTEARS-MLP approximates the generative SEM model by MLP while only constraining the first layer of the MLP with the continuous acyclicity constraint. 
\vspace{5pt}
\item DAGGNN\cite{yu2019dag} formulates causal discovery with variational autoencoder, where the encoder and decoder are all graph neural networks. Choosing the evidence lower bound as the loss function and slightly modifying the acyclicity constraint, DAGGNN could manage to recover the weighted adjacency matrix. 
\vspace{5pt}
\item NOCURL~\cite{yu2020dags} utilizes a two-step procedure:  first find an initial cyclic solution, then employ Hodge decomposition of graphs and learn an acyclic graph by projecting the cyclic graph to the gradient of a potential function. 
\vspace{5pt}
\item DARING~\cite{He0SXLJ21} imposes explicit residual independence constraint with an adversarial strategy. We choose the backbone as NOTEARS-MLP to conform with the settings above. 
\end{itemize}

\subsubsection{Hyperparameter Settings}
\label{hyperparameter_settings}
For linear settings, there are two hyper-parameters in total: $\lambda_1$ for the $l_1$-norm regularization term; $\lambda_D$ for the DICD penalty term. We tune $\lambda_1$ in $\{0.01, 0.1\}$ for NOTEARS and DICD. Besides, we tune $\lambda_D$ in $\{0.1, 1\}$ for DICD. Then for nonlinear settings, there are three hyper-parameters in total: $\lambda_1, \lambda_2, \lambda_D$, among which $\lambda_1$ and $\lambda_2$ are for the $l_1$-norm and $l_2$-norm regularization terms, respectively. We tune $\lambda_1, \lambda_2$ both in $\{0.01, 0.1\}$ and $\lambda_D$ in $\{0.1, 1\}$. The scheduler for $\lambda$ in Equation (\ref{objective_linear}) and Equation (\ref{objective_nonlinear}) is shown as follows:
\begin{equation*}
    \lambda = \left\{\begin{array}{cc}
        \displaystyle\frac{k}{K/3}\circ\lambda_D & k\leq K/3 \\
        \displaystyle\lambda_D & K/3 \leq k \leq 2K/3 \\
        \displaystyle\frac{K-k}{K/3}\circ\lambda_D & k\geq 2K/3 
    \end{array}\right.,
\end{equation*}
where $K$ is the estimated total step, and $k$ is the current iteration. The intuition behind this scheduler is that we need to let the model fit the data at the beginning, then as the training process goes, we need to enforce our penalty to help find the true causal graph. Then for the last stage, the graph structure has almost been inferred. We need to gradually remove our penalty to let the structural causal function with the given structure fit the data.

\begin{table*}[t]
    \centering
    \vspace{5pt}
    \caption{Linear Setting, for ER and SF graphs of 10, 20, 50 nodes}
    \resizebox{\textwidth}{!}{
    \label{tab:linear_exp}
    \begin{tabular}{c|ccc|ccc|ccc}
\toprule
 & \multicolumn{3}{c|}{10 nodes} &   \multicolumn{3}{c|}{20 nodes}  &  \multicolumn{3}{c}{50 nodes}   \\
\textbf{ER4} & FDR & TPR & SHD & FDR & TPR & SHD & FDR & TPR & SHD \\
 \midrule
CD-NOD & 0.48\scriptsize{$\pm$0.06} &  0.17\scriptsize{$\pm$0.02} & 33.5\scriptsize{$\pm$0.9} & 0.48\scriptsize{$\pm$0.19} &  0.11\scriptsize{$\pm$0.02} & 75.7\scriptsize{$\pm$5.0} & 0.56\scriptsize{$\pm$0.08} &  0.15\scriptsize{$\pm$0.06} & 195.0\scriptsize{$\pm$8.0} \\
NOTEARS  & 0.07\scriptsize{$\pm$0.01} & 0.78\scriptsize{$\pm$0.02} & 10.1\scriptsize{$\pm$0.8} &  
0.20\scriptsize{$\pm$0.03} & 0.71\scriptsize{$\pm$0.08} & 35.9\scriptsize{$\pm$7.4} &  
0.27\scriptsize{$\pm$0.04} & 0.78\scriptsize{$\pm$0.03} & 96.7\scriptsize{$\pm$13.7} \\
DAGGNN  & 0.08\scriptsize{$\pm$0.01} & 0.86\scriptsize{$\pm$0.02} & 7.6\scriptsize{$\pm$0.9} &  
0.34\scriptsize{$\pm$0.04} & 0.79\scriptsize{$\pm$0.05} & 48.3\scriptsize{$\pm$7.1} &  
0.36\scriptsize{$\pm$0.03} & 0.86\scriptsize{$\pm$0.03} & 122.3\scriptsize{$\pm$14.6} \\ 
NOCURL & 0.14\scriptsize{$\pm$0.01} & 0.81\scriptsize{$\pm$0.01} & 9.8\scriptsize{$\pm$0.4} & 0.19\scriptsize{$\pm$0.02} & \textbf{0.92\scriptsize{$\pm$0.01}} & 22.3\scriptsize{$\pm$2.4} & 0.31\scriptsize{$\pm$0.02} & \textbf{0.94\scriptsize{$\pm$0.01}} & 89.5\scriptsize{$\pm$9.5} \\
DARING  & 0.08\scriptsize{$\pm$0.01} & 0.81\scriptsize{$\pm$0.03} & 9.3\scriptsize{$\pm$1.3} &  
0.38\scriptsize{$\pm$0.07} & 0.58\scriptsize{$\pm$0.07} & 60.9\scriptsize{$\pm$9.9} &  
0.48\scriptsize{$\pm$0.01} & 0.60\scriptsize{$\pm$0.06} & 187.6\scriptsize{$\pm$4.9} \\  
DICD  & \textbf{0.03\scriptsize{$\pm$0.01}} & \textbf{0.88\scriptsize{$\pm$0.02}} & \textbf{4.9\scriptsize{$\pm$1.0}} &
\textbf{0.16\scriptsize{$\pm$0.03}} &0.89\scriptsize{$\pm$0.04} & \textbf{19.7\scriptsize{$\pm$5.8}} &  
\textbf{0.26\scriptsize{$\pm$0.02}} & 0.89\scriptsize{$\pm$0.06} & \textbf{82.0\scriptsize{$\pm$5.3}} \\
\bottomrule
\toprule
\textbf{SF4} & FDR & TPR & SHD & FDR & TPR & SHD & FDR & TPR & SHD \\
\midrule
CD-NOD & 0.36\scriptsize{$\pm$0.10} &  0.19\scriptsize{$\pm$0.02} & 25.0\scriptsize{$\pm$1.4} & 0.34\scriptsize{$\pm$0.05} &  0.18\scriptsize{$\pm$0.01} & 59.3\scriptsize{$\pm$0.9} & 0.38\scriptsize{$\pm$0.04} &  0.15\scriptsize{$\pm$0.01} & 168.3\scriptsize{$\pm$1.7} \\
NOTEARS  & \textbf{0.04\scriptsize{$\pm$0.04}} & 0.81\scriptsize{$\pm$0.02} & 6.1\scriptsize{$\pm$1.1} & 
0.19\scriptsize{$\pm$0.01} & 0.77\scriptsize{$\pm$0.02} & 27.1\scriptsize{$\pm$1.5} &  
\textbf{0.17\scriptsize{$\pm$0.01}} & 0.83\scriptsize{$\pm$0.01} & 60.7\scriptsize{$\pm$2.5} \\
DAGGNN  & 0.07\scriptsize{$\pm$0.03} & 0.98\scriptsize{$\pm$0.02} & 2.9\scriptsize{$\pm$1.5} &   
0.27\scriptsize{$\pm$0.03} & 0.84\scriptsize{$\pm$0.02} & 31.6\scriptsize{$\pm$2.6} &  
0.26\scriptsize{$\pm$0.02} & 0.88\scriptsize{$\pm$0.01} & 80.6\scriptsize{$\pm$8.3} \\ 
NOCURL & 0.06\scriptsize{$\pm$0.02} & 0.86\scriptsize{$\pm$0.02} & 4.8\scriptsize{$\pm$1.1} & 0.25\scriptsize{$\pm$0.01} & \textbf{0.86\scriptsize{$\pm$0.01}} & 28.2\scriptsize{$\pm$1.3} & 0.26\scriptsize{$\pm$0.08} & \textbf{0.93\scriptsize{$\pm$0.05}} & 71.8\scriptsize{$\pm$10.6} \\
DARING  & 0.17\scriptsize{$\pm$0.04} & 0.86\scriptsize{$\pm$0.07} & 9.0\scriptsize{$\pm$2.7} &   
0.26\scriptsize{$\pm$0.02} & 0.80\scriptsize{$\pm$0.01} & 32.6\scriptsize{$\pm$1.8} &  
0.28\scriptsize{$\pm$0.02} & 0.87\scriptsize{$\pm$0.01} & 87.3\scriptsize{$\pm$5.9} \\
DICD  & 0.05\scriptsize{$\pm$0.04} & \textbf{0.98\scriptsize{$\pm$0.02}} & \textbf{2.3\scriptsize{$\pm$1.7}} &   
\textbf{0.16\scriptsize{$\pm$0.05}} & 0.81\scriptsize{$\pm$0.09} & \textbf{22.1\scriptsize{$\pm$8.4}} &  
0.18\scriptsize{$\pm$0.03} & 0.91\scriptsize{$\pm$0.01} & \textbf{53.7\scriptsize{$\pm$8.7}}\\ 
\bottomrule
\end{tabular}}
\vspace{-8pt}
\end{table*}

\subsubsection{Evaluation Protocols}
\label{subsub:evaluation_protocols}
We use the three most popular metrics in causal discover: false discovery rate (FDR), true positive rate (TPR) and structural Hamming distance (SHD). Higher TPR stand for better performances, while FDR and SHD should be lower to represent the better strategies. 


\subsubsection{Synthetic Datasets}
\label{subsub:synthetic_datasets}
We conduct experiments on two synthetic datasets for linear and nonlinear settings. Besides, we apply our method DICD on Colored MNIST~\cite{mnist} dataset to explore its effectiveness on real-world datasets. As for the synthetic data, the ground truth DAG is generated from two random graph models: Erdos-Renyi (ER) and scale-free (SF), following ~\cite{zheng2020learning}. For the overall experimental comparison, we set the node degree as four. For the linear and nonlinear setting, we construct the environment variable $E$ to simulate the effects of the environments on the additive noises. For the linear setting, after generating the graph, we randomly select $\lfloor 0.3*d \rfloor$ nodes in this graph, and then build $\lfloor 0.3*d \rfloor$ new nodes. We call these new variables as the environment variables and denote them as $E$, and they satisfy the same distributions and are all independent from each other. Then we create $\lfloor 0.3*d \rfloor$ edges from each environment node to the each selected node. In this way, we could simulate different environments with varying the distribution of the environment variables $E$. 

Then given this new graph with $d + \lfloor 0.3*d \rfloor$nodes, we simulate random edge weights to obtain a new matrix $\Mat{W}\in\mathbb{R}^{(d+\lfloor 0.3*d \rfloor)\times(d+\lfloor 0.3*d \rfloor)}$. With $\Mat{W}$, we sample $\Mat{X}=\Mat{W}^T \Mat{X}+z \in \mathbb{R}^{d+\lfloor 0.3*d \rfloor}$ with $z$ from Gaussian noise model to generate 10 random datasets $\Mat{X}_e \in \mathbb{R}^{n\times (d+\lfloor 0.3*d \rfloor)}$. Then we remove the column corresponding to the additional $\lfloor 0.3*d \rfloor$ environment variables $E$ to generate the final datasets $\Mat{X} \in \mathbb{R}^{n\times d}$. 
We change the variances of the noises of the environment variables $E$ to simulate different environments. 
In the nonlinear setting, after generating the graph, we randomly select $\lfloor 0.5*d \rfloor$ nodes in the graph, and then create $\lfloor 0.5*d \rfloor$ environment nodes and also $\lfloor 0.5*d \rfloor$ edges from each environment node to the selected node. 
Then given this new graph, we simulate the SEM $\Mat{X}_j = F_j(\Mat{X}_{pa(j)}) + z_j$ for all $j\in\{1\cdots, d+\lfloor 0.5*d \rfloor\}$ in topological order. For the environment variables, we vary the distributions of the noises in different environments. Then for the other nodes, we set the distribution of the noises as $\mathcal{N}(0,1)$. We choose $f_j$ to be Additive Noise Models with two-layer MLPs. 
We will also remove the column corresponding to the variable $E$ to generate the datasets $\Mat{X}\in \mathbb{R}^{n\times d}$.

\subsubsection{Real-world Dataset}
\label{subsub:realworld_dataset}
For the real-world dataset, We sample 10000 images in total, and 2000 images for each environment. We classify MNIST digits from 2 classes, where classes 0 and 1
indicate original digits (0,1,2,3,4) and (5,6,7,8,9). Then for each environment, we select the ratio of class 0 being green, which is shown in the "Ratio" column in Table \ref{tab:exp_setting_mnist}. Then the noise variances(i.e. noise scale) for each environment are provided in the ``Noise Scale" column in Table \ref{tab:exp_setting_mnist}.

\begin{table}[t]
    \centering
    \caption{Experimental Settings for Colored MNIST.}
    \label{tab:exp_setting_mnist}
    \begin{tabular}{ccc}
    \toprule
        Environment & Ratio & Noise Scale \\
    \midrule
        $e_1$ & 0.16 & 10/255 \\
        $e_2$ & 0.32 & 20/255 \\
        $e_3$ & 0.48 & 30/255 \\
        $e_4$ & 0.64 & 40/255 \\
        $e_5$ & 0.80 & 50/255 \\
    \bottomrule
    \end{tabular}
    \vspace{-10pt}
\end{table}

\begin{table*}[t]
    \centering
\vspace{5pt}
    \caption{Nonlinear Setting, for ER and SF graphs of 10, 20, 50 nodes}
    \resizebox{\textwidth}{!}{
    \label{tab:nonlinear_exp}
    \begin{tabular}{c|ccc|ccc|ccc}
\toprule
 & \multicolumn{3}{c|}{10 nodes} &   \multicolumn{3}{c|}{20 nodes}  &  \multicolumn{3}{c}{50 nodes}   \\
\textbf{ER4} & FDR & TPR & SHD & FDR & TPR & SHD & FDR & TPR & SHD \\
 \midrule
CD-NOD & 0.39\scriptsize{$\pm$0.06} &  0.50\scriptsize{$\pm$0.07} & 20.0\scriptsize{$\pm$3.1} & 
0.31\scriptsize{$\pm$0.04} &  0.56\scriptsize{$\pm$0.06} & 56.8\scriptsize{$\pm$4.2} & 
0.35\scriptsize{$\pm$0.07} &  0.82\scriptsize{$\pm$0.05} & 
115.7\scriptsize{$\pm$18.3}\\ 
NOTEARS-MLP & 0.24\scriptsize{$\pm$0.10} & 0.44\scriptsize{$\pm$0.12} & 23.8\scriptsize{$\pm$3.5} & \textbf{0.25\scriptsize{$\pm$0.05}} & 0.35\scriptsize{$\pm$0.09} & 59.0\scriptsize{$\pm$5.0} & 0.30\scriptsize{$\pm$0.08} & 0.86\scriptsize{$\pm$0.06} & 102.9\scriptsize{$\pm$25.4}\\
DAGGNN & 0.50\scriptsize{$\pm$0.06} & 0.21\scriptsize{$\pm$0.04} & 32.2\scriptsize{$\pm$1.5} & 0.61\scriptsize{$\pm$0.07} & 0.24\scriptsize{$\pm$0.05} & 82.9\scriptsize{$\pm$4.5} & 0.62\scriptsize{$\pm$0.06} & 0.15\scriptsize{$\pm$0.03} & 217.0\scriptsize{$\pm$16.0} \\
NOCURL & 0.38\scriptsize{$\pm$0.02} & 0.38\scriptsize{$\pm$0.05} & 27.4\scriptsize{$\pm$1.5} & 0.56\scriptsize{$\pm$0.07} & 0.34\scriptsize{$\pm$0.07} & 80.2\scriptsize{$\pm$7.6} & 0.69\scriptsize{$\pm$0.07} & 0.28\scriptsize{$\pm$0.07} & 258.2\scriptsize{$\pm$25.2} \\
DARING & 0.44\scriptsize{$\pm$0.04} & 0.28\scriptsize{$\pm$0.04} & 29.8\scriptsize{$\pm$2.7} & 0.55\scriptsize{$\pm$0.09} & 0.23\scriptsize{$\pm$0.07} & 77.6\scriptsize{$\pm$5.7} & 0.58\scriptsize{$\pm$0.09} & 0.23\scriptsize{$\pm$0.05} & 209.4\scriptsize{$\pm$20.0}  \\
DICD & \textbf{0.20\scriptsize{$\pm$0.08}} & \textbf{0.57\scriptsize{$\pm$0.10}} & \textbf{19.2\scriptsize{$\pm$4.3}} & 0.26\scriptsize{$\pm$0.06} & \textbf{0.69\scriptsize{$\pm$0.07}} & \textbf{40.3\scriptsize{$\pm$6.1}} & \textbf{0.26\scriptsize{$\pm$0.04}} & \textbf{0.88\scriptsize{$\pm$0.03}} & \textbf{84.4\scriptsize{$\pm$11.7}} \\ 
\bottomrule
\toprule
\textbf{SF4} & FDR & TPR & SHD & FDR & TPR & SHD & FDR & TPR & SHD \\
\midrule
CD-NOD & 0.42\scriptsize{$\pm$0.03} &  0.68\scriptsize{$\pm$0.04} & 19.5\scriptsize{$\pm$2.5} & 0.31\scriptsize{$\pm$0.09} & 0.64\scriptsize{$\pm$0.07} & 46.3\scriptsize{$\pm$6.8} & 0.32\scriptsize{$\pm$0.05} & 0.75\scriptsize{$\pm$0.09} & 104.3\scriptsize{$\pm$12.1} \\
NOTEARS-MLP & 0.41\scriptsize{$\pm$0.10} & 0.31\scriptsize{$\pm$0.16} & 25.2\scriptsize{$\pm$4.6} & 0.29\scriptsize{$\pm$0.10} & 0.56\scriptsize{$\pm$0.10} & 51.0\scriptsize{$\pm$7.4} & 0.29\scriptsize{$\pm$0.08} & 0.72\scriptsize{$\pm$0.13} & 113.7\scriptsize{$\pm$17.3} \\
DAGGNN & 0.62\scriptsize{$\pm$0.09} & 0.26\scriptsize{$\pm$0.07} & 31.2\scriptsize{$\pm$3.4} & 0.69\scriptsize{$\pm$0.07} & 0.16\scriptsize{$\pm$0.03} & 80.1\scriptsize{$\pm$7.4} & 0.64\scriptsize{$\pm$0.03} & 0.15\scriptsize{$\pm$0.05} & 210.3\scriptsize{$\pm$8.2} \\
NOCURL & 0.54\scriptsize{$\pm$0.06} & 0.40\scriptsize{$\pm$0.10} & 26.2\scriptsize{$\pm$2.0} & 0.70\scriptsize{$\pm$0.04} & 0.27\scriptsize{$\pm$0.05} & 87.2\scriptsize{$\pm$3.0} & 0.70\scriptsize{$\pm$0.02} & 0.22\scriptsize{$\pm$0.01} & 240.8\scriptsize{$\pm$9.3} \\
DARING & 0.54\scriptsize{$\pm$0.10} & 0.27\scriptsize{$\pm$0.05} & 29.0\scriptsize{$\pm$2.7} & 0.58\scriptsize{$\pm$0.06} & 0.21\scriptsize{$\pm$0.03} & 73.1\scriptsize{$\pm$4.5} & 0.53\scriptsize{$\pm$0.03} & 0.19\scriptsize{$\pm$0.02} & 193.0\scriptsize{$\pm$5.0} \\
DICD & \textbf{0.34\scriptsize{$\pm$0.06}} & \textbf{0.71\scriptsize{$\pm$0.15}} & \textbf{16.2\scriptsize{$\pm$4.3}} & \textbf{0.27\scriptsize{$\pm$0.06}} & \textbf{0.68\scriptsize{$\pm$0.15}} & \textbf{37.9\scriptsize{$\pm$7.1}} & \textbf{0.29\scriptsize{$\pm$0.04}} & \textbf{0.80\scriptsize{$\pm$0.03}} & \textbf{99.3\scriptsize{$\pm$9.0}}\\
\bottomrule
    \end{tabular}}
    \vspace{-8pt}
\end{table*}


\subsection{Overall Performances (RQ1)}
We present the overall performances of DICD and the baselines for fair comparison. In the baselines, NOTEARS, DAGGNN, NoCurl, DARING are run on the concatenated datasets from all the environments. CD-NOD is run with the environment-id corresponding to each sample.

\subsubsection{Linear Synthetic Data}
In this experiment, we explore the improvements when introducing different groups by comparing the DAG estimations against the ground truth structure. We simulate \{ER4, SF4\} graphs with $d=\{10, 20, 50, 100\}$ nodes. For each environment, we generate $200$ samples. We evaluate our methods with datasets from 5 environments, and the variances of Gaussian noise for the environment variable $E$ in each environment are \{0.2, 0.4, 0.6, 0.8, 1.0\}. Table \ref{tab:linear_exp} and Table \ref{tab:linear_100node} summarizes the results when the number of nodes equals to $\{10, 20, 50, 100\}$. From these tables, we could have the following key observations: (1) DICD has outperformed all other baselines across various settings. More precisely, DICD achieves significant improvements over the strongest baselines by up to 36\% in SHD (10 nodes, ER graph).
(2) Generally, DICD has the lower FDR and higher TPR, which also coincides with the intuition that DICD could eliminate spurious correlations and reveal the true ones. 

\subsubsection{Nonlinear Synthetic Data}
We also conducte enormous experiments to demonstrate the effectiveness of DICD in the nonlinear setting. Similarly to the linear setting, we simulate \{ER4, SF4\} graphs with $d=\{10, 20, 50, 100\}$ nodes. For each graph, we generate data from two environments, 1000 samples for each environment. The Gaussian noises for the environment variable $E$ in these two groups are $\{0.2, 0.4\}$. The results with $d=\{10, 20, 50\}$ are provided in Table \ref{tab:nonlinear_exp} and results with $d=100$ are shown in Table \ref{tab:nonlinear_100node}. From these tables, we could find: (1) DICD consistently outperforms other baselines in all eight settings upon the most crucial metric SHD. The improvements on SHD over the best baseline are up to 29\% (50 nodes, ER graph). (2) DICD achieves compatible FDR with NOTEARS but far higher TPR. This shows that in the nonlinear setting, DICD is better at revealing the true causal correlations that might have been missed by NOTEARS. (3) The over-reconstruction problem still exists in other methods, while DICD has the potential to mitigate it, which could be the reason for the performance improvements. (4) CD-NOD performs fairly well in nonlinear cases, which means the multi-environment setting might be more helpful when the correlations between variables are more complicated. However, CD-NOD consumes in average more than 9 hours in the simplest setting (10 nodes), and more than 300 hours in the case of 50 nodes, which is far more expensive than our algorithm. As shown in Table \ref{tab:running_time_comparison}, we only record the running time of CD-NOD in nonlinear settings, since its running time is almost unacceptable in these cases. Then we report the averaged running time for different seeds and different graph types (ER or SF). From the table, we can observe that CD-NOD is very expensive for nonlinear cases, while the running time of DICD almost remain constant when the number of nodes get larger. 

\begin{table}[ht]
    \centering
    \caption{Running time comparison}
    \label{tab:running_time_comparison}
    \begin{tabular}{cccc}
    \toprule
       &  10 nodes & 20 nodes & 50 nodes \\
       \midrule
      CD-NOD & $>$9h & $>$64h & $>$300h \\
      DICD & 15min & 15min & 15min \\
      \bottomrule
    \end{tabular}
\end{table}

\begin{table}[t]
    \centering
    \caption{Linear Setting, for ER and SF graphs of 100 nodes}
    \label{tab:linear_100node}
    \resizebox{\linewidth}{!}{
    \begin{tabular}{c|ccc|ccc}
    \toprule
    & \multicolumn{3}{c|}{\textbf{ER4}} & \multicolumn{3}{c}{\textbf{SF4}} \\
    Method & FDR & TPR & SHD & FDR & TPR & SHD \\
    \midrule
NOTEARS & \textbf{0.17\tiny{$\pm$0.03}} & 0.78\scriptsize{$\pm$0.03} & 147.9\scriptsize{$\pm$15.7} & \textbf{0.12\scriptsize{$\pm$0.04}} & 0.91\scriptsize{$\pm$0.02} & 81.4\scriptsize{$\pm$23.0}\\
DAGGNN & 0.23\scriptsize{$\pm$0.03} & 0.79\scriptsize{$\pm$0.04} & 178.3\scriptsize{$\pm$9.6} & 0.31\scriptsize{$\pm$0.02} & 0.92\scriptsize{$\pm$0.01} & 195.3\scriptsize{$\pm$20.4}\\
NOCURL & 0.25\scriptsize{$\pm$0.01} & \textbf{0.94\scriptsize{$\pm$0.00}} & 136.0\scriptsize{$\pm$5.3} & 0.14\scriptsize{$\pm$0.04} & \textbf{0.98\scriptsize{$\pm$0.01}} & 69.8\scriptsize{$\pm$23.2}\\
DARING & 0.29\scriptsize{$\pm$0.02} & 0.69\scriptsize{$\pm$0.04} & 234.0\scriptsize{$\pm$14.3} & 0.29\scriptsize{$\pm$0.01} & 0.90\scriptsize{$\pm$0.01} & 180.9\scriptsize{$\pm$5.4}\\
DICD & 0.20\scriptsize{$\pm$0.03} & 0.87\scriptsize{$\pm$0.01} & \textbf{133.9\scriptsize{$\pm$16.2}} & 0.12\scriptsize{$\pm$0.03} & 0.97\scriptsize{$\pm$0.01} & \textbf{61.9\scriptsize{$\pm$15.6}}\\
 \bottomrule
\end{tabular}}
\vspace{10pt}
\caption{Nonlinear Setting, for ER and SF graphs of 100 nodes}
    \label{tab:nonlinear_100node}
    \resizebox{\linewidth}{!}{
    \begin{tabular}{c|ccc|ccc}
      \toprule
    & \multicolumn{3}{c|}{\textbf{ER4}} & \multicolumn{3}{c}{\textbf{SF4}} \\
    Method & FDR & TPR & SHD & FDR & TPR & SHD \\
    \midrule
NOTEARS & 0.20\scriptsize{$\pm$0.04} & 0.40\scriptsize{$\pm$0.07} & 270.0\scriptsize{$\pm$24.1} & \textbf{0.17\scriptsize{$\pm$0.06}} & 0.42\scriptsize{$\pm$0.10} & 260.1\scriptsize{$\pm$27.3}\\
DAGGNN & 0.52\scriptsize{$\pm$0.05} & 0.09\scriptsize{$\pm$0.01} & 390.6\scriptsize{$\pm$7.7} & 0.50\scriptsize{$\pm$0.08} & 0.09\scriptsize{$\pm$0.02} & 390.3\scriptsize{$\pm$13.4}\\
DARING & 0.41\scriptsize{$\pm$0.05} & 0.15\scriptsize{$\pm$0.02} & 367.0\scriptsize{$\pm$8.8} & 0.42\scriptsize{$\pm$0.04} & 0.15\scriptsize{$\pm$0.02} & 367.3\scriptsize{$\pm$6.9}\\
NOCURL & 0.62\scriptsize{$\pm$0.06} & 0.23\scriptsize{$\pm$0.02} & 447.2\scriptsize{$\pm$33.0} & 0.60\scriptsize{$\pm$0.03} & 0.18\scriptsize{$\pm$0.02} & 418.6\scriptsize{$\pm$10.7}\\
DICD & \textbf{0.18\scriptsize{$\pm$0.05}} & \textbf{0.54\scriptsize{$\pm$0.10}} & \textbf{226.6\scriptsize{$\pm$23.1}} & 0.18\scriptsize{$\pm$0.03} & \textbf{0.53\scriptsize{$\pm$0.06}} & \textbf{228.0\scriptsize{$\pm$16.3}}\\
 \bottomrule
    \end{tabular}}
\vspace{-6pt}
\end{table}

\subsection{Study of Various Factors (RQ2)}
In this section, we discuss various factors that may affect the performances of DICD and other methods. Due to limit of space, we discuss the effect of environmental imbalance in Appendix \ref{effects_of_imbalance}. 

\subsubsection{The Effect of Environment Number}
\label{the_effect_of_the_number_of_environments}

To explore how the number of environments affects the performances, we conduct experiments on both linear and nonlinear settings. The total number of the examples is set to 1000. We choose $d=10$ and $s_0=40$ with ER graph for this case study. If the number of the environments is $N$, then the number of samples for each environment is $\lfloor 1000/N\rfloor$. The noises for different environments are sliced from the head of the array \{0.2, 0.4, 0.6, 0.8, 1.0, 0.1, 0.3, 0.5, 0.7, 0.9\}. For example, the noises are \{0.2, 0.4\} when the number of environments is 2. In Figure \ref{fig:effect_of_number_of_environment}, we can observe: 
\begin{figure}[t]
    \centering
    \subfigure[Linear Setting]{\label{fig:case_1_shd}\includegraphics[width=0.490\linewidth]{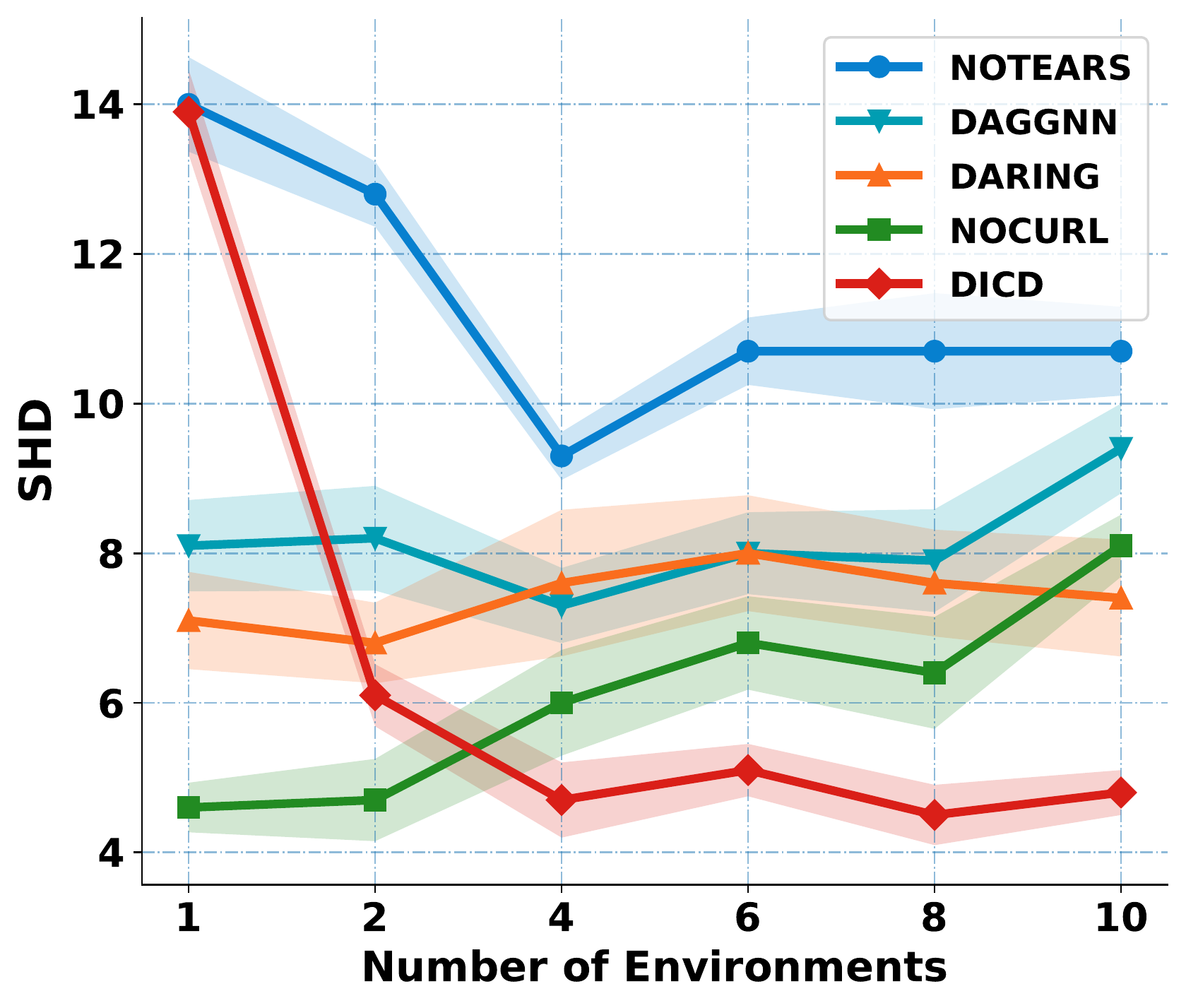}}
\subfigure[Nonlinear Setting]{\label{fig:nonlinear_case_1_shd}\includegraphics[width=0.490\linewidth]{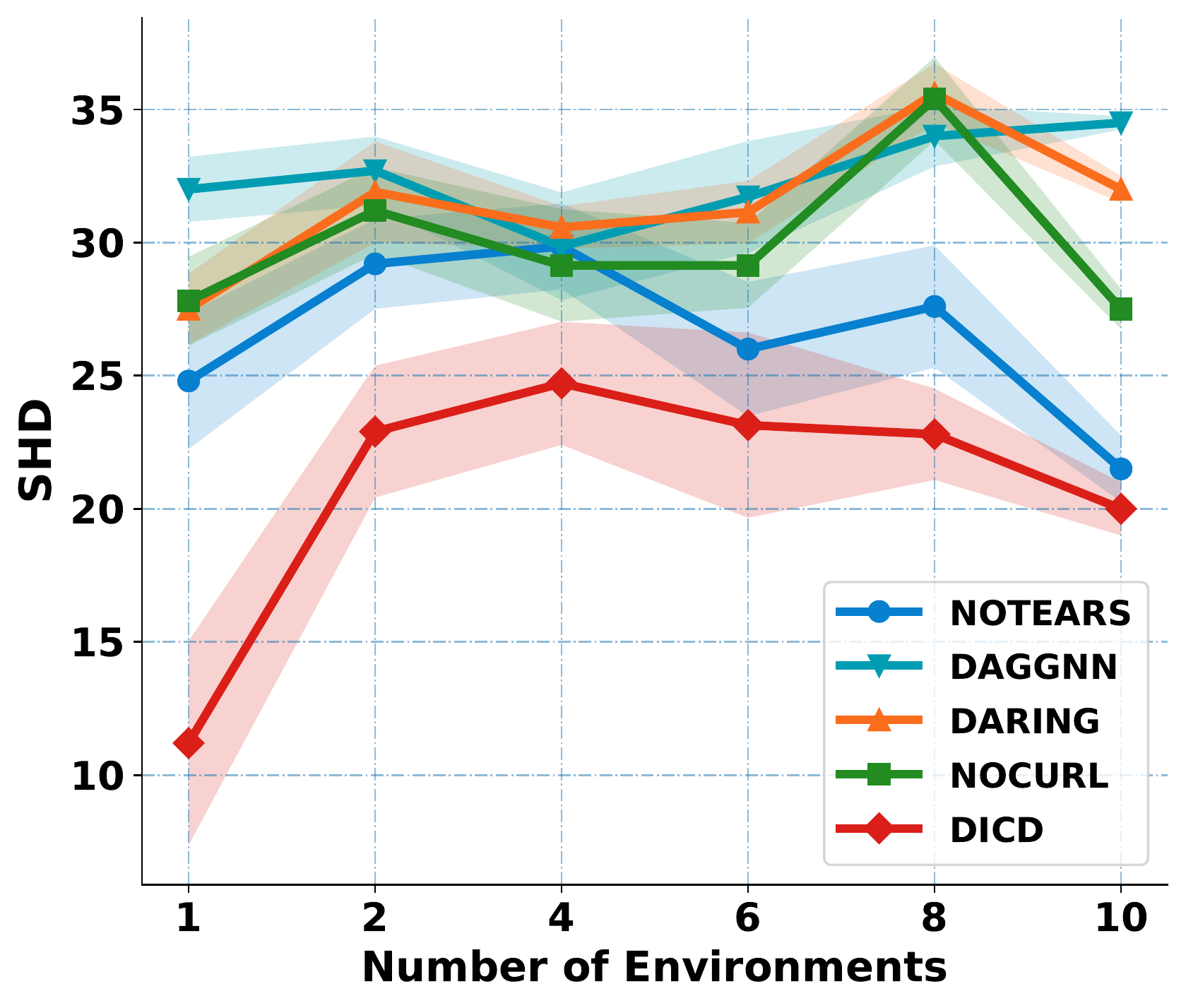}}
\vspace{-8pt}
    \caption{SHD \wrt the number of environments.}
    \label{fig:effect_of_number_of_environment}
    \vspace{-10pt}
\end{figure}
(1) When the number of environments is greater than 4 in the linear setup, more environments will not yield to better results, which means for simple structure equation models, four environments may have already ruled out the possibility of incorrect graphs. 
(2) In the nonlinear setting, it is amazing to find that DICD could achieve better performance even there is only one environment. The reason could be that when there are more parameters in the function, the explicit constraint on the optimality of the parameters could contribute much to learning the best graph, which is more likely to be the ground truth graph. 
(3) More environments in nonlinear settings could degrade the performance of all approaches; the reason for this could be that heterogeneous noises can be particularly unfriendly when the relationships between variables become quite intricate.

\subsubsection{The Effect of the Imbalance between Different Environments}
\label{effects_of_imbalance}
Since the imbalance of data is the major problem in machine learning, we aim to explore how the imbalance of data size between different environments would affect the performances of our method and other baselines. The total number of samples in this dataset is 1000 for linear settings and 2000 for nonlinear settings. We choose ER graphs with $d=20$ and $s_0=4d=80$ for this case study. The noises for the variable $E$ in two environments are set to be \{0.2, 0.4\}. Then the ratio in Figure \ref{fig:effect_of_imbalance_of_environment} means the percentage of the samples from the first environment. From this figure, we could have the following observations: (1) In most of the settings, DICD outperforms other methods consistently, except when the data is higher imbalanced in linear setting, where there is almost only one environment. (2) In both linear and nonlinear settings, the balanced situation is the best for DICD, which means we have enough information from every environment. (3) Even there is only one group, DICD could make significant improvements against NOTEARS, which coincides with the discovery in Section \ref{the_effect_of_the_number_of_environments}. 
\begin{figure}[t]
    \centering
    \subfigure[Linear Setting]{\label{fig:case_2_shd}\includegraphics[width=0.490\linewidth]{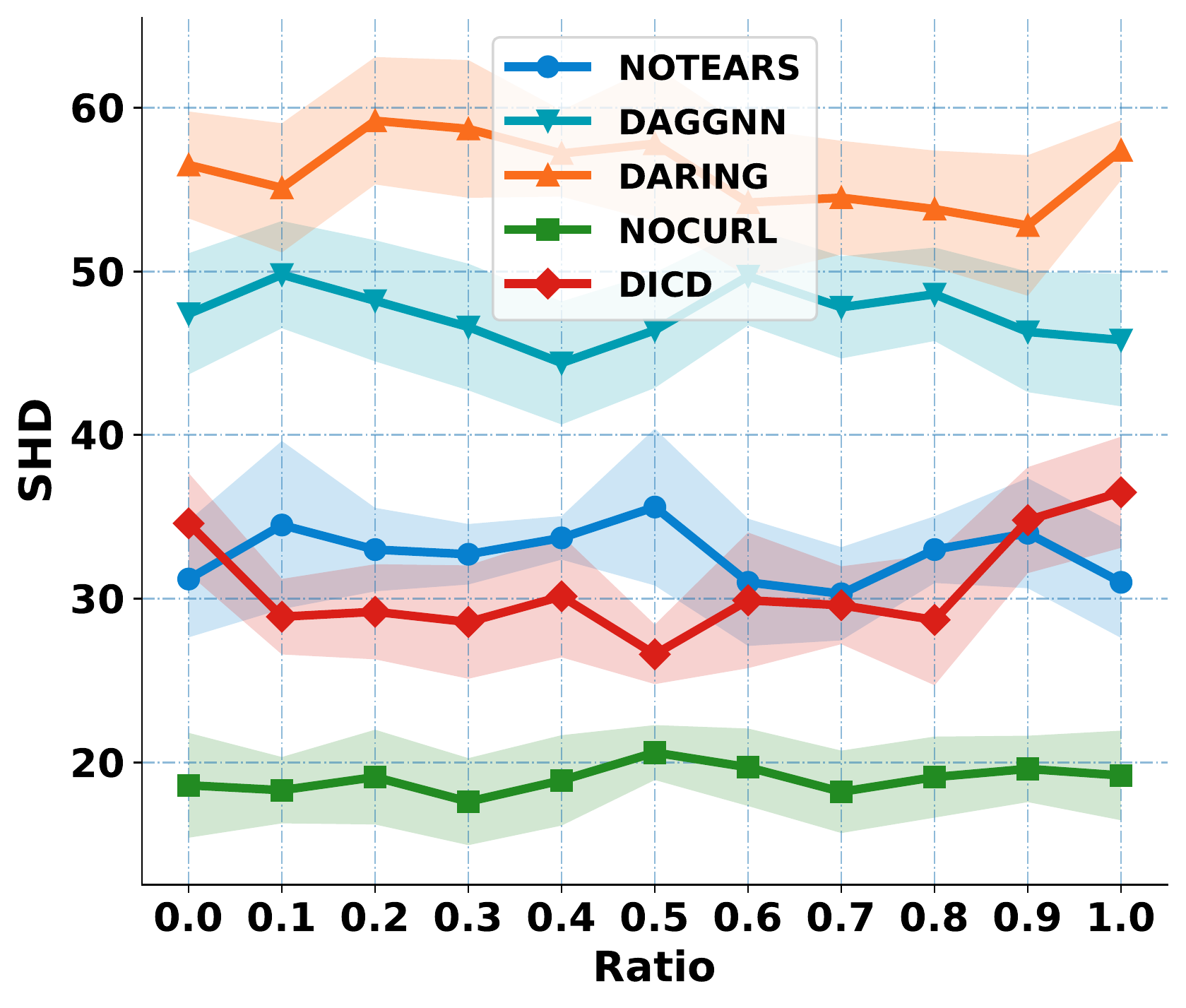}}
\subfigure[Nonlinear Setting]{\label{fig:nonlinear_case_2_shd}\includegraphics[width=0.490\linewidth]{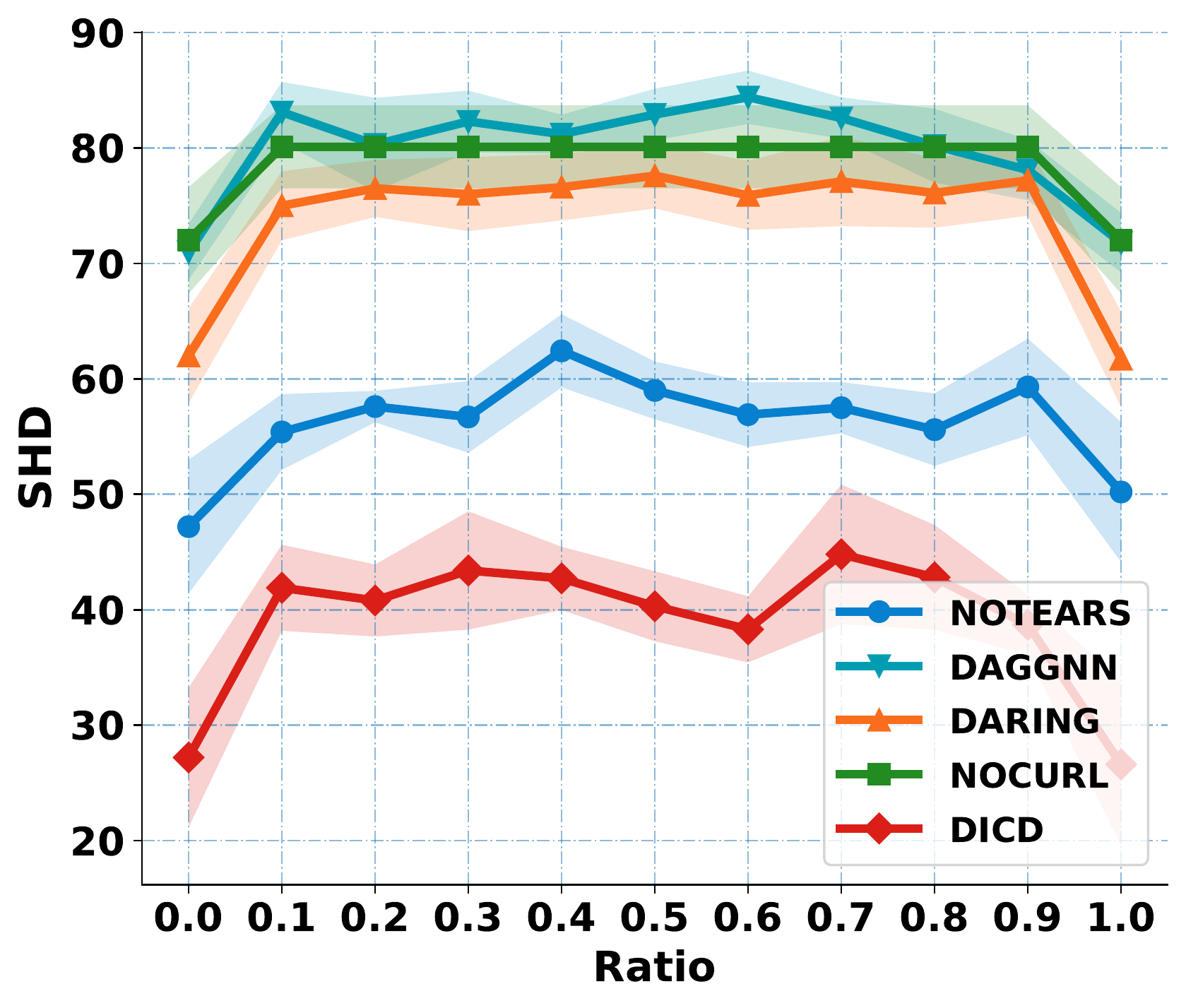}}
    \caption{SHD for different percentage of imbalance between two environments in linear and nonlinear settings.}
    \label{fig:effect_of_imbalance_of_environment}
    \vspace{-5pt}
\end{figure}
\begin{figure*}
    \centering     
    \subfigure[Ground Truth]{\label{fig:GroundTruth_CMNIST}\includegraphics[width=0.18\linewidth]{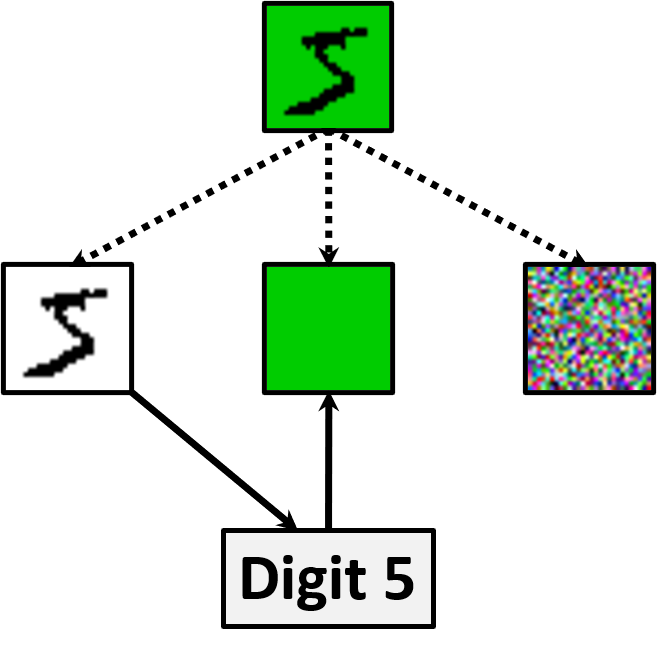}}\hfill
    \subfigure[NOTEARS]{\label{fig:NOTEARS_CMNIST}\includegraphics[width=0.18\linewidth]{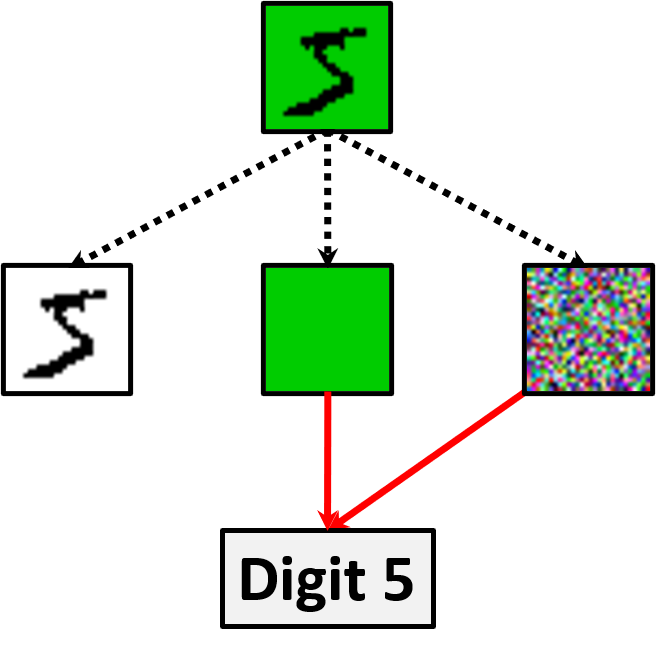}}\hfill
    \subfigure[DAGGNN]{\label{fig:DAGGNN_CMNIST}\includegraphics[width=0.18\linewidth]{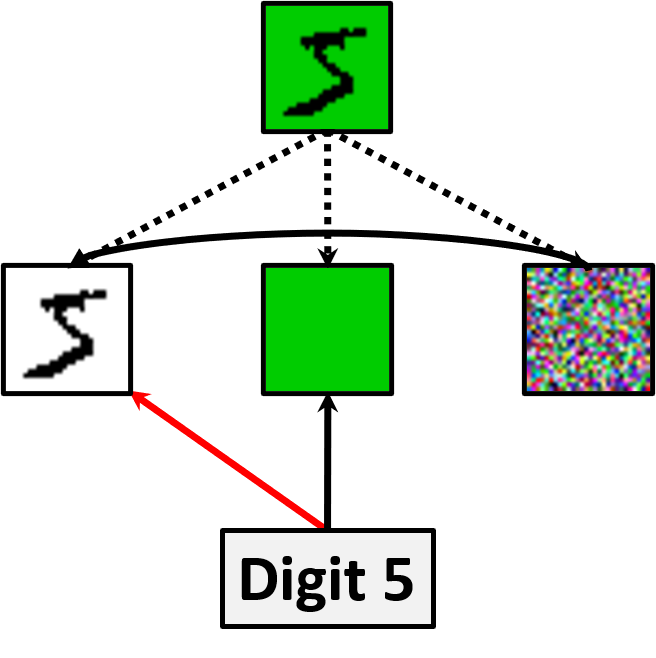}}\hfill
    \subfigure[NOCURL]{\label{fig:NoCurl_CMNIST}\includegraphics[width=0.18\linewidth]{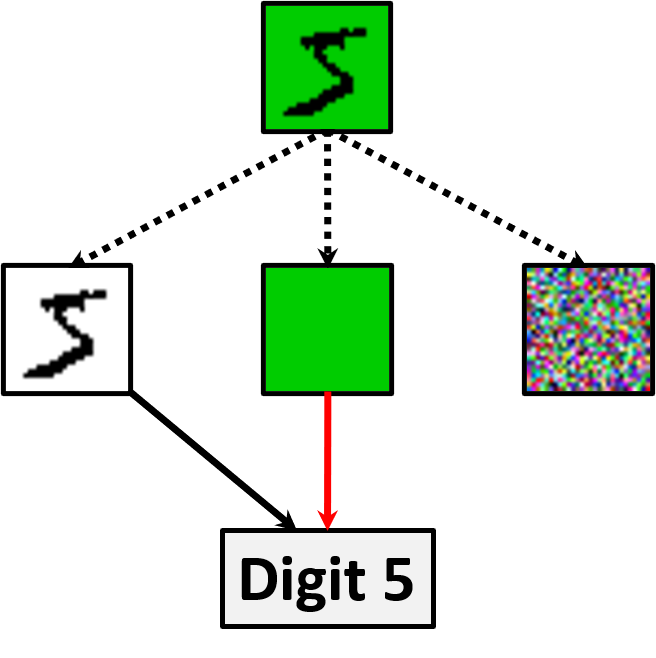}}
    \hfill
    \subfigure[DICD]{\label{fig:DICD_CMNISt}\includegraphics[width=0.18\linewidth]{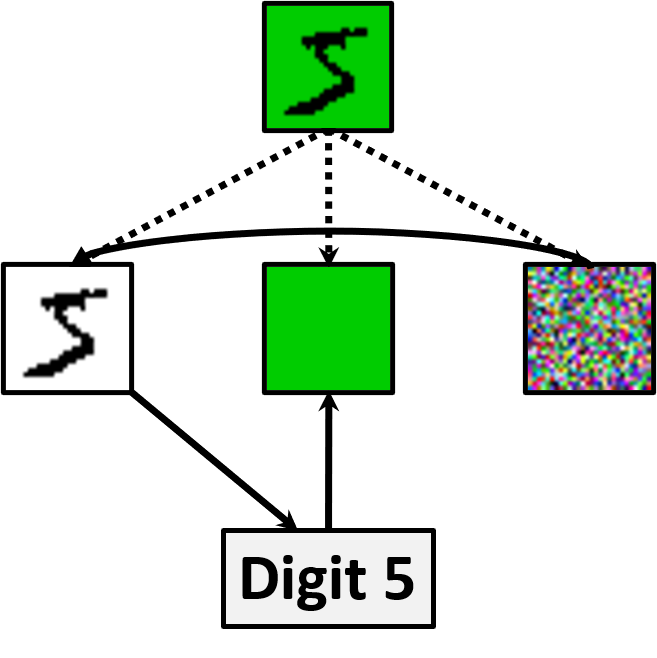}}
    \vspace{-6pt}
    \caption{Revealed graph by different methods on Real World Data}
    \label{fig:real_world_data}
    \vspace{-10pt}
\end{figure*}

\subsubsection{The Effect of Density of Graph}
\begin{figure}[t]
    \centering
    \subfigure[Linear Setting]{\label{fig:case_3_shd}\includegraphics[width=0.490\linewidth]{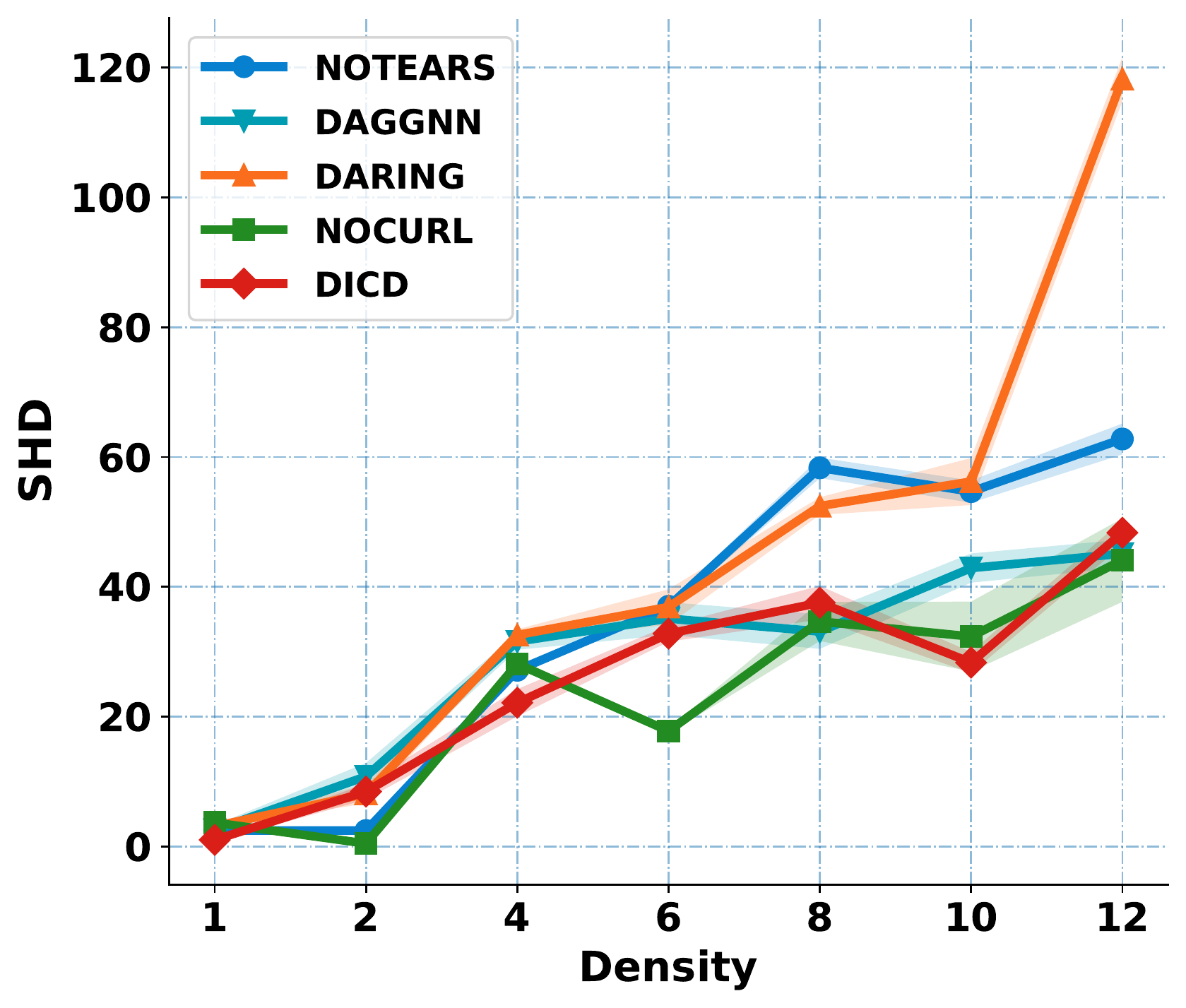}}
\subfigure[Nonlinear Setting]{\label{fig:nonlinear_case_3_shd}\includegraphics[width=0.490\linewidth]{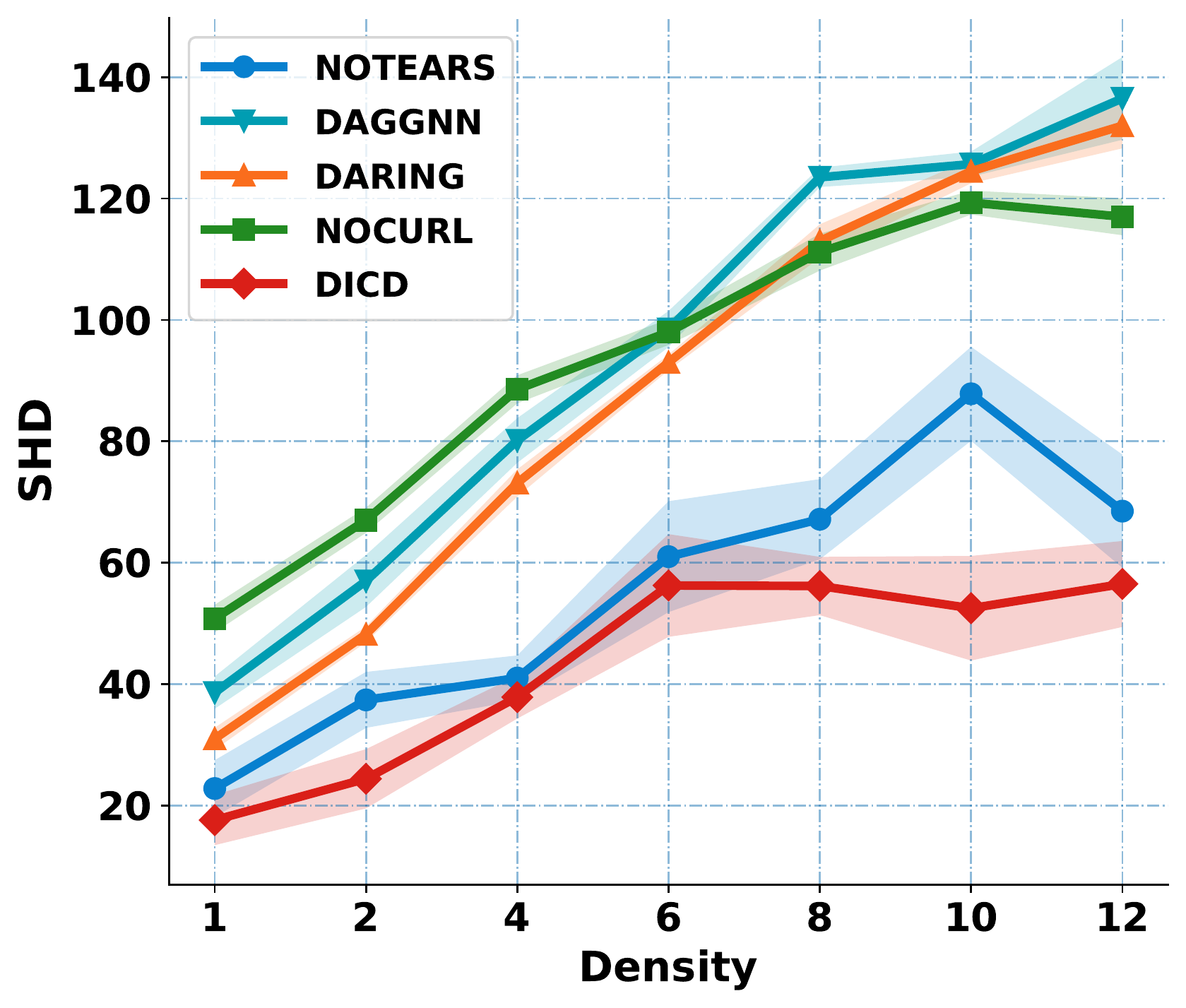}}
    \caption{SHD for different density conditions.}
    \label{fig:effect_of_the_sparsity_of_the_graph}
    \vspace{-8pt}
\end{figure}
We aim to discover how much the node degree (\ie density) will affect our algorithm compared to the other baselines. We choose SF graph with $d=20$ for this case study. In the linear setting, we generate samples from five environments, 200 samples for each. And for nonlinear settings, we draw samples from two environments, 1000 samples for each. The x-axis represents the mean degree of the nodes in the generated graph. For instance, Node degree = 10 means there are 200 edges in total when generating the SF graph. from Figure \ref{fig:effect_of_the_sparsity_of_the_graph}, we could observe: (1) DICD almost consistently outperforms the other methods, except for few cases. (2) As the node degree (\ie the density) increases, especially in nonlinear settings, the improvements of DICD over baselines get larger, which means DICD could better adapt to the denser settings.

\subsection{Real Data (RQ3)}
Finally, we evaluate our method on the real-world data: ColoredMNIST. Our method, as well as NOTEARS-MLP, could be simply extended to the setting of vector-valued variable. DAG-GNN is already designed to work in this situation. However, DARING do not discuss this situation in the paper. Though it seems that DARING could also be generalized to vector-valued, efforts need to be taken. Thus we omit it as comparison in this section. As stated earlier in Section \ref{subsub:realworld_dataset}, we sample 10000 images from the MNIST dataset and generate the colored version of MNIST. In the pictures, we have kept the digits to be black, while the background has the color red or green. The images are resized to 8*8, and then flattened to yield a 64-dim vector. The label vector and color vector are set to be all zeros or all ones (according to the original label) with dimension as 64. Then during searching (or training) we only consider the possible correlations among the white backgrounds, colors, noises, and digits. Thus we have four variables in the designed task. As shown in Figure \ref{fig:real_world_data}. We could find NOTEARS, DAGGNN and NoCurl all make mistakes, while DICD gives reasonable predictions.

\vspace{-8pt}
\section{Related Work}
Causal Discovery has caught enormous attention recently. We will mainly discuss the differentiable score-based algorithms and the works considering multi-environments.

\vspace{5pt}
\noindent \textbf{Differentiable Score-based algorithms}.
Score-based causal discovery methods aim to find the causal structure by optimizing a carefully defined score function via various modelling methods. Though there are conventional methods \cite{DBLP:journals/ml/HeckermanGC95,tsamardinos2006max,DBLP:journals/datamine/GamezMP11,DBLP:conf/nips/NieMCJ14,DBLP:conf/nips/ScanagattaCCZ15} applying various techniques such as hill-climbing\cite{DBLP:journals/jair/YuanM13} and integer programming~\cite{DBLP:journals/corr/abs-1904-10574}, 
the differentiable methods using gradient descent show stronger power. NOTEARS~\cite{zheng2018dags} reformulate 
the causal discovery problem with acyclicity constraint as a continuous program.
DAG-GNN~\cite{yu2019dag} proposes a variant of the acyclicity constraint and solves the generalized linear SEM in a graph autoencoder structure.  
NOTEARS-MLP~\cite{zheng2020learning} and GRAN-DAG~\cite{LachapelleBDL20} extend the continuous acyclicity regularization into a neural network and achieve better results in the nonlinear settings. 
RL-BIC~\cite{zhu2019causal} introduces RL
to find the DAG with the best BIC score. 
DARING~\cite{He0SXLJ21} proposes to constrain the independence between the residuals and adopts an adversarial training strategy.
DAG-GAN~\cite{gao2021dag} formulates the problem of DAG structure learning from the perspective of distributional optimization.
\cite{GOLEM} shows that applying soft sparsity and DAG constraints would be enough.
There are also other works that apply some alternatives of the continuous acyclicity regularization.  
DAG-NOFEAR~\cite{NoFears} propose the constraint term that only depends on the absolute value of the adjacency matrix $\Mat{W}$, instead of $\Mat{W} \odot \Mat{W}$ in $h(\Mat{W})$ in Equation \eqref{hofw}, which is correlated to $l1$ penalty and sparsity. 
NOCURL~\cite{yu2020dags} aims at eliminating the DAG constraint entirely, by showing that the set of weighted adjacency matrices of DAGs are equivalent to the set of weighted gradients of graph potential functions.
NODAG~\cite{varando2020learning} proposes to solve an $l_1$-penalized optimization, while
ENCO~\cite{lippe2021efficient} provides convergence guarantees without constraining the score function with respect to acyclicity. 
These methods all have their own strategies to control the acyclicity.

\vspace{5pt}
\noindent \textbf{Previous work considering multi-environments}.
Most work with multi-environment settings are built on constraint-based methods~\cite{spirtes2000causation,spirtes2013causal,CD-NOD,zhang2021testing,DBLP:journals/ijon/Shanmugam01}. Although they achieved lots of improvements, there are some major limitations: 1) may have overly strict domain definition~\cite{CausalInferenceIP}; 2) limited to linear cases only~\cite{MultiDomain,SAEM,RegressionInvariance}; 3) designed to solve much simpler problems than CD such as causal direction identification~\cite{SAEM, FOM}; 4) may involve a large number of independence tests and be very time-consuming~\cite{MultiDomain,CD-NOD}, which is also the general problem of constrain-based methods. 
However, We define the environments following \cite{IRM,MultiDomain,RegressionInvariance}, and our method is capable of solving the general causal discovery problems in both linear and nonlinear cases. There is another interesting and more general task called Federated Causal Discovery~\cite{FCD,FCDBN,LocallyOverlappingVariables,CSOVS,CBCDMultiInterventions}, which aims to solve the problem about decentralized datasets. Though our targets are different, the algorithms are somewhat similar. 

\section{Conclusion and Future Work}
\label{conclusion}

Despite the great success in causal structure learning, today's differentiable causal discovery methods are still suffering from non-identifiability issue and over-reconstruction problem from observation data. 
Utilizing the inherent heterogeneity when environment partition is provided in advanced is not yet discussed in the differentiable causal discovery task.
In this paper, we proposed a simple yet effective Differentiable Invariant Causal Discovery (DICD) method to tackle the challenge by incorporating the multi-environment information. 
Our main idea is that, given the true causal graph, the parameters of SEM learned from different environments should remain consistent. 
Theoretical guarantees for the identifiability of proposed DICD are
provided under certain assumptions about the environments.
The extensive experimental results
demonstrate the effectiveness of DICD. 

One limitation of DICD is the requirement of prior knowledge of the environment. In the future study, we will explore the end-to-end causal discovery technique with environment inference, \ie directly infer partitions of training data without access to environment label. We believe that the idea of this work, \ie invariant learning inspired causal discovery, provides a potential research direction and will inspire more valuable works for learning identifiable DAG from observation data. 



\bibliographystyle{IEEEtran}
\bibliography{my-tpami2022-rcexplainer}



\vspace{-30pt}
%


\begin{IEEEbiography}[{\includegraphics[width=1in,height=1.25in,clip,keepaspectratio]{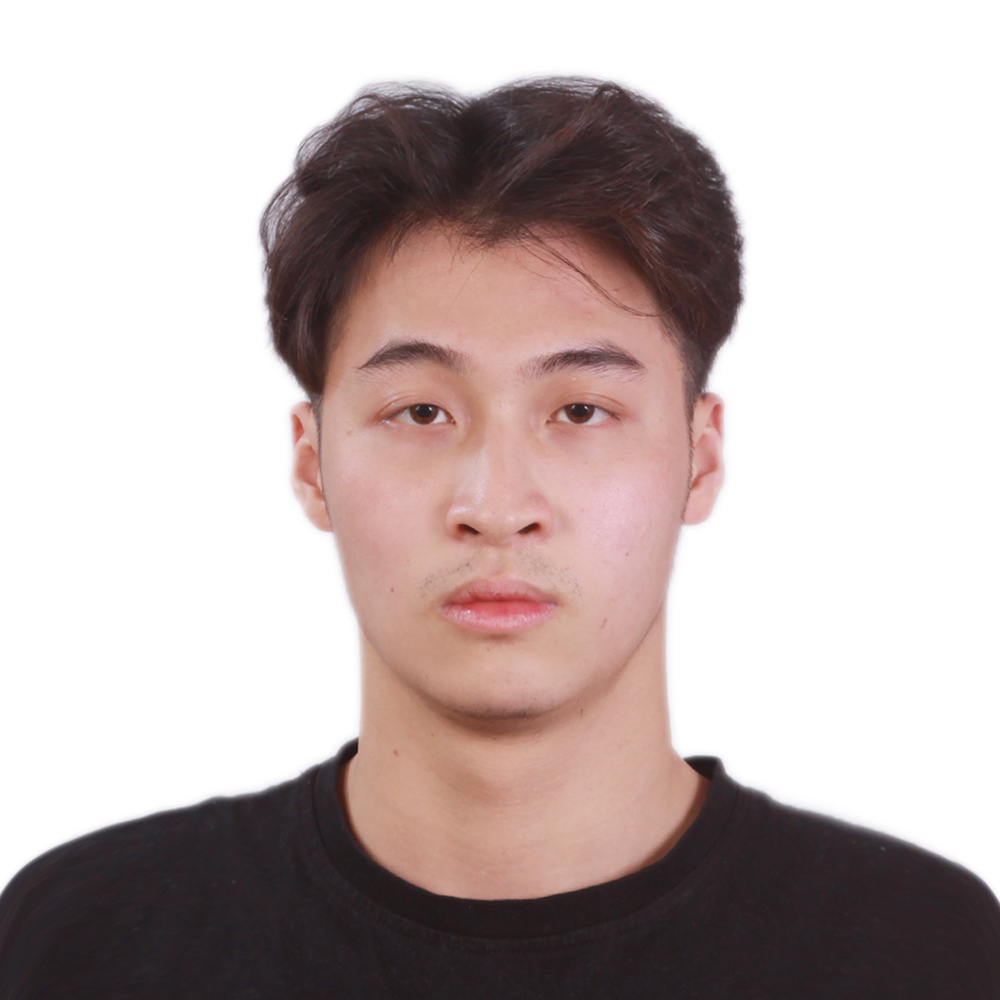}}]{Yu Wang}
    Yu Wang is currently a first-year PhD student at University of California, San Diego. His research interests lie in Machine Learning and Natural Language Process. During his undergraduate stage in University of Science and Technology of Chinan (USTC), he was awarded Baosteel Scholarship, Huawei Scholarship, Excellent Student Scholarship - Gold, etc. 
\end{IEEEbiography}
\vspace{-30pt}

\begin{IEEEbiography}[{\includegraphics[width=1in,height=1.25in,clip,keepaspectratio]{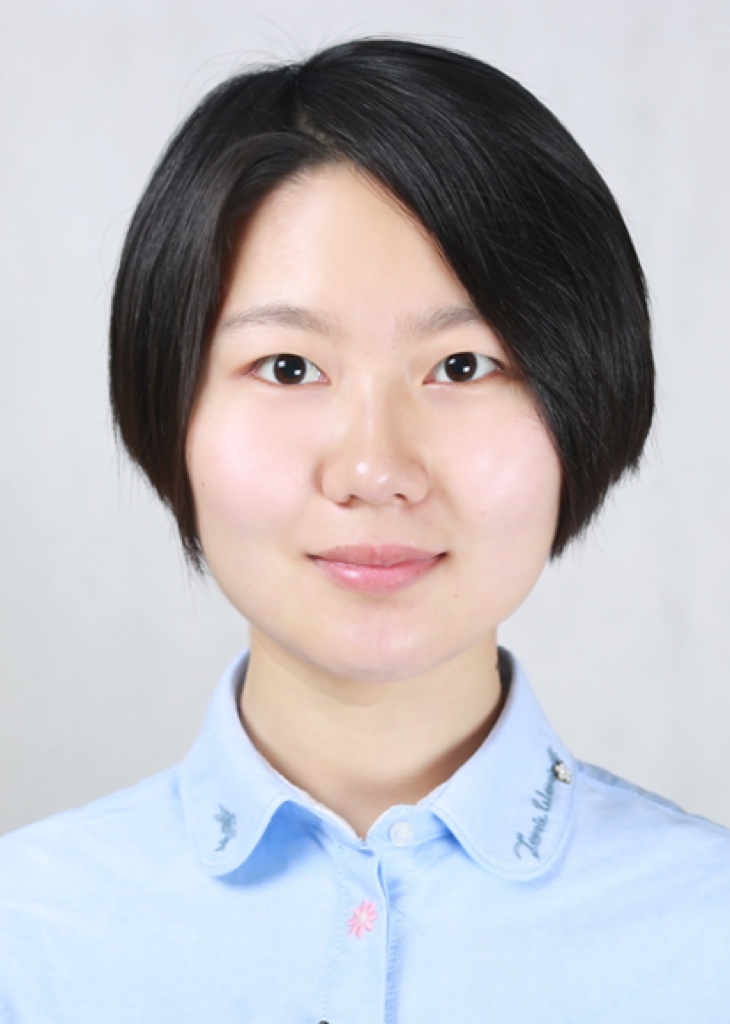}}]{An Zhang}
    is now a research fellow at National University of Singapore.
    She received her Ph.D. degree from the Department of Statistics and Applied Probability, National University of Singapore, in 2021.
    Her areas of interest in research include explainable artificial intelligence, causal representation learning, differentiable causal discovery, and graph neural networks.
    She has publications appeared in several top conferences such as NeurIPS, ICLR, SIGIR.
\end{IEEEbiography}
\vspace{-30pt}

\begin{IEEEbiography}[{\includegraphics[width=1in,height=1.25in,clip,keepaspectratio]{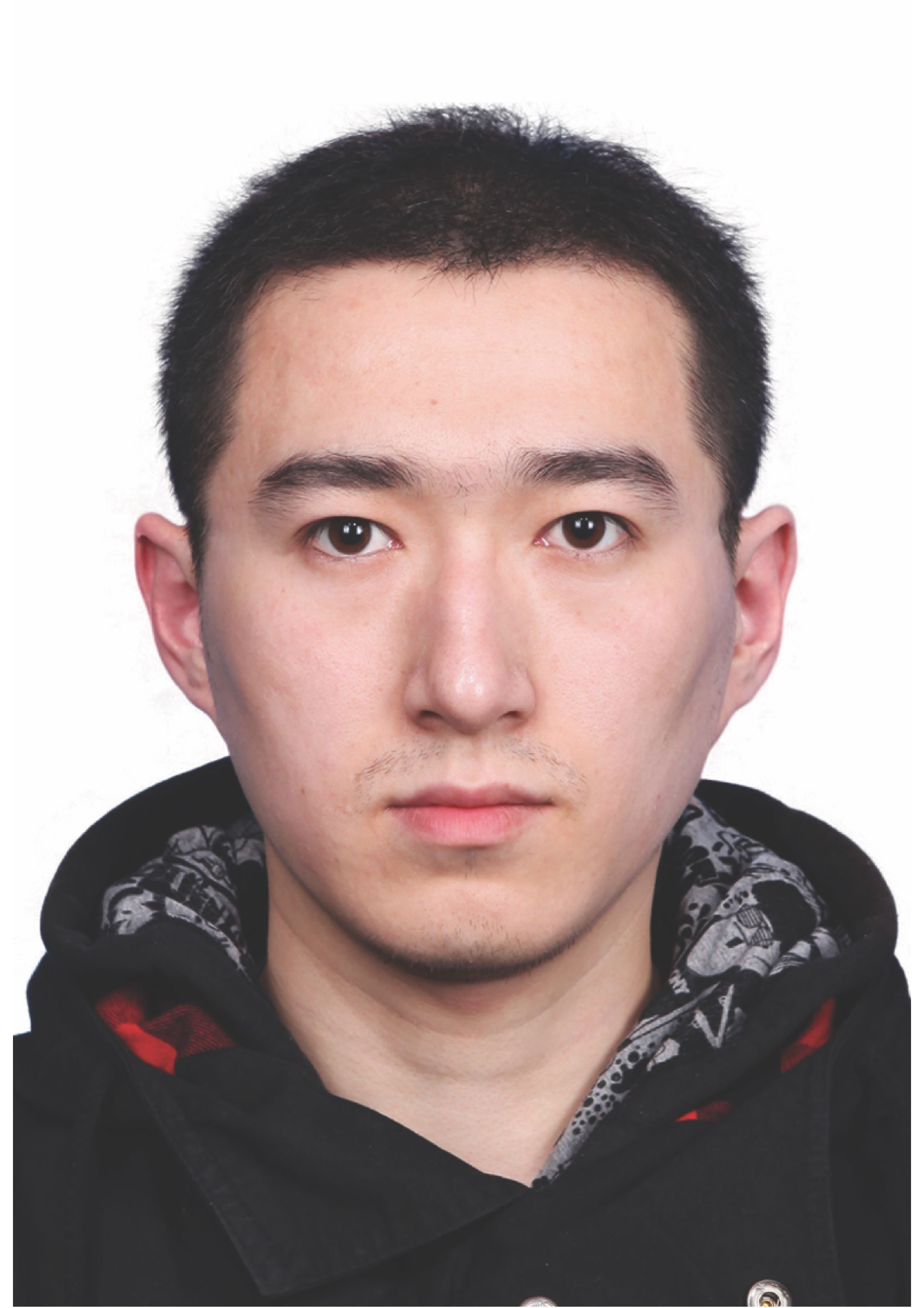}}]{Xiang Wang}
    is now a professor at the University of Science and Technology of China (USTC). He received his Ph.D. degree from National University of Singapore in 2019. His research interests include recommender systems, graph learning, and explainable deep learning techniques. He has published some academic papers on international conferences such as NeurIPS, ICLR, KDD, WWW, SIGIR. He serves as a program committee member for several top conferences such as KDD, SIGIR, WWW, and IJCAI, and invited reviewer for prestigious journals such as TKDE, TOIS, TNNLS.
\end{IEEEbiography}
\vspace{-30pt}

\begin{IEEEbiography}[{\includegraphics[width=1in,height=1.25in,clip,keepaspectratio]{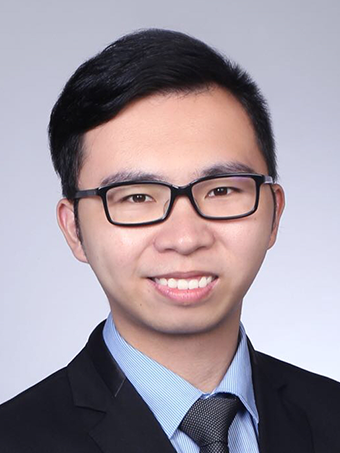}}]{Yancheng Yuan}
is a research assistant Professor of Department of Applied Mathematics, The Hong Kong Polytechnic University. His research focuses on the optimization theory, algorithm design and software development, mathematical foundation of data science, and data-driven applications. He has published papers in prestigious journals and conferences, including SIOPT, JMLR, IJAA, OMS, ICML, WWW.
\end{IEEEbiography}
\vspace{-30pt}

\begin{IEEEbiography}[{\includegraphics[width=1in,height=1.25in,clip,keepaspectratio]{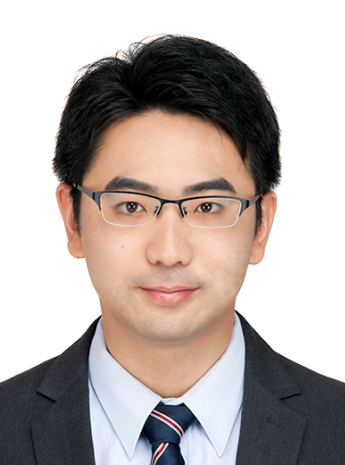}}]{Xiangnan He}
    is a professor at the University of Science and Technology of China (USTC). His research interests span information retrieval, data mining, and multi-media analytics. He has over 90 publications in top conferences such as SIGIR, WWW, and MM, KDD, and journals including TKDE, TOIS, and TMM. His work has received the Best Paper Award Honorable Mention in WWW 2018 and SIGIR 2016. He is in the Editorial Board of the AI Open journal, served as the PC chair of CCIS 2019, the area chair of MM 2019, ECML-PKDD 2020, and the (senior) PC member for top conferences including SIGIR, WWW, KDD, WSDM etc.
\end{IEEEbiography}
\vspace{-30pt}

\begin{IEEEbiography}[{\includegraphics[width=1in,height=1.25in,clip,keepaspectratio]{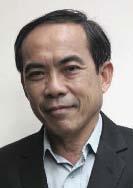}}]{Tat-Seng Chua} is the KITHCT Chair Professor at the School of Computing, National University of Singapore. He was the Acting and Founding Dean of the School during 1998-2000. Dr Chuas main research interest is in multimedia information retrieval and social media analytics. In particular, his research focuses on the extraction, retrieval and question-answering (QA) of text and rich media arising from the Web and multiple social networks. He is the co-Director of NExT, a joint Center between NUS and Tsinghua University to develop technologies for live social media search. Dr Chua is the 2015 winner of the prestigious ACM SIGMM award for Outstanding Technical Contributions to Multimedia Computing, Communications and Applications. He is the Chair of steering committee of ACM International Conference on Multimedia Retrieval (ICMR) and Multimedia Modeling (MMM) conference series. Dr Chua is also the General Co-Chair of ACM Multimedia 2005, ACM ICMR 2005, ACM SIGIR 2008, and ACM Web Science 2015. He serves in the editorial boards of four international journals. Dr. Chua is the co-Founder of two technology startup companies in Singapore. He holds a PhD from the University of Leeds, UK.
\end{IEEEbiography}
    
    
    

\clearpage
\appendix
\begin{table*}[ht]
    \caption{The specific confounder case in which NOTEARS\cite{zheng2018dags} might make mistakes. Detailed data generation processes are as follows. Environment 1: $A = z_A(\sim \mathcal{N}(0,4)), B = A/2 + z_B(\sim \mathcal{N}(0,1)), C = A+B/2 + z_C(\sim \mathcal{N}(0,4))$. Environment 2: $A = z_A(\sim \mathcal{N}(0,4)), B = A/2 + z_B(\sim \mathcal{N}(0,4)), C = A+B/2 + z_C(\sim \mathcal{N}(0,1))$. Note that only the variance of $z_B$ is different in two environments.}
    \label{tab:another_toy_example}
    \resizebox{\textwidth}{!}{%
    \begin{tabular}{c|rrrrrr}
    \toprule
      & \multicolumn{1}{c}{\begin{tikzpicture}[scale=1.0, every node/.style={scale=1.0}]
    \node[draw=blue,circle] (y) at (210:0.6) {\color{blue}\textsf{B}};
    \node[draw=blue,circle] (x) at (90:0.6) {\color{blue}\textsf{A}};
    \node[draw=blue,circle] (z) at (-30:0.6) {\color{blue}\textsf{C}};
    \draw[-latex,draw=blue] (x) -- (y);
    \draw[-latex,draw=blue] (x) -- (z);
    \draw[-latex,draw=blue] (y) -- (z);
\end{tikzpicture}}
 &
\multicolumn{1}{c}{\begin{tikzpicture}[scale=1.0, every node/.style={scale=1.0}]
    \node[draw,circle] (y) at (210:0.6) {\textsf{B}};
    \node[draw,circle] (x) at (90:0.6) {\textsf{A}};
    \node[draw,circle] (z) at (-30:0.6) {\textsf{C}};
    \draw[-latex] (x) -- (y);
    \draw[-latex] (x) -- (z);
    \draw[-latex, draw=red] (z) -- (y);
\end{tikzpicture}}  &
\multicolumn{1}{c}{\begin{tikzpicture}[scale=1.0, every node/.style={scale=1.0}]
    \node[draw,circle] (y) at (210:0.6) {\textsf{B}};
    \node[draw,circle] (x) at (90:0.6) {\textsf{A}};
    \node[draw,circle] (z) at (-30:0.6) {\textsf{C}};
    \draw[-latex] (x) -- (y);
    \draw[-latex, draw=red] (z) -- (x);
    \draw[-latex, draw=red] (z) -- (y);
\end{tikzpicture}}  &
\multicolumn{1}{c}{\begin{tikzpicture}[scale=1.0, every node/.style={scale=1.0}]
    \node[draw,circle] (y) at (210:0.6) {\textsf{B}};
    \node[draw,circle] (x) at (90:0.6) {\textsf{A}};
    \node[draw,circle] (z) at (-30:0.6) {\textsf{C}};
    \draw[-latex, draw=red] (y) -- (x);
    \draw[-latex, draw=red] (z) -- (x);
    \draw[-latex] (y) -- (z);
\end{tikzpicture}}  &
\multicolumn{1}{c}{\begin{tikzpicture}[scale=1.0, every node/.style={scale=1.0}]
    \node[draw,circle] (y) at (210:0.6) {\textsf{B}};
    \node[draw,circle] (x) at (90:0.6) {\textsf{A}};
    \node[draw,circle] (z) at (-30:0.6) {\textsf{C}};
    \draw[-latex, draw=red] (y) -- (x);
    \draw[-latex, draw=red] (z) -- (x);
    \draw[-latex, draw=red] (z) -- (y);
\end{tikzpicture}} &
\multicolumn{1}{c}{\begin{tikzpicture}[scale=1.0, every node/.style={scale=1.0}]
    \node[draw,circle] (y) at (210:0.6) {\textsf{B}};
    \node[draw,circle] (x) at (90:0.6) {\textsf{A}};
    \node[draw,circle] (z) at (-30:0.6) {\textsf{C}};
    \draw[-latex, draw=red] (y) -- (x);
    \draw[-latex] (x) -- (z);
    \draw[-latex] (y) -- (z);
\end{tikzpicture}} \\
\midrule
$e_1$ & 6.00(1.00, 0.50, 0.50) & 6.05(0.00, 1.25, 0.40) & 8.97(0.00, 0.67, 0.40) & 5.67(0.00, 0.67, 1.50) & 8.97(0.00, 0.67, 0.40) & 5.00(1.00, 1.00, 0.50)  \\ 
$e_2$ & 9.00(1.00, 0.50, 0.50) & 8.00(-0.75, 1.25, 1.00) & 11.22(-0.75, 0.61, 1.00) & 9.96(-0.29, 0.76, 0.90) & 11.56(-0.29, 0.76, 0.55) & 9.20(0.40, 1.00, 0.50) \\
\bottomrule
\end{tabular}}
\end{table*}

\section*{Basic Structure Example}

In this section, we provide another example shown in Table \ref{tab:another_toy_example} to demonstrate our motivation introduced in Section \ref{introduction}. In this table, the blue graphs denote the ground truth, while graphs with red edges are the wrong edges in the wrong structures. Below the structures, the first number in front of the parenthesis refers to the least reconstruction loss($||\Mat{X}\Mat{W}-\Mat{X}||_2$) given the structure in the same column of Table \ref{tab:another_toy_example}. Given the structure, we can use the structure to constrain which of the elements in $\Mat{W}$ should be non-zero elements and others are not. Then we can learn the optimal parameters for these non-zero elements. As shown in the Table, the numbers in the parenthesis refer to the optimal parameters for the edges between A-B, B-C, A-C, respectively. From this table, we can find that with wrong causal structures which have the potential to obtain lower reconstruction loss (\ie over-reconstruction), the parameters learned given these structures are not consistent across different environments. Thus with the condition given by DICD, we can rule out wrong graphs. 

\section*{Notations}
\label{notations}
We provide the meanings of all the notations used in this paper in Table \ref{tab:notations}. 
\begin{table}[ht]
    \centering
    \caption{Notations and the corresponding Meanings}
    \label{tab:notations}
    \resizebox{0.48\textwidth}{!}{
    \begin{tabular}{cc}
    \toprule
    Notations & Meanings \\
    \midrule
        n & number of the data samples \\
        d & number of the variables (or nodes) \\
        $\Mat{X}$ & $\mathbb{R}^{n\times d}$, observational data \\
        $\Mat{x}_j$ & $\mathbb{R}^{n}$, observational data corresponding to the $j$-th variable \\
        $X_j$ & $j$-th node variable \\
        $F$ & ground truth structure function \\
        $F_j$ & ground truth structure function corresponding to $X_j$ \\
        $f$ & estimated structure function \\
        $f_j$ & estimated structure function corresponding to $X_j$ \\
         Pa($X_j$) & parent set of $X_j$ \\
         Pa(j) & the index set of the parents of $X_j$ \\
        $z_j$ & additive noise of $X_j$ \\
        $\sigma_j^2$ & variance of $z_j$ \\
        $\mathcal{G}(\cdot)$ & corresponding graph of $\cdot$ \\
        $\Mat{W}$ & $\mathbb{R}^{d\times d}$, the coefficients for all edges of $\mathcal{G}(f)$ in linear situations\\
        $m_l$ & the number of hidden units in the $l$-th layer in MLP \\
        $\Mat{S}$ & $\{0, 1\}^{d\times d}$ invariant structure model \\
        \bottomrule
    \end{tabular}}
    \label{tab:my_label}
\end{table}

\section*{Proof of Theorem \ref{theorem:identifiability}}
\label{proof_of_identifiability}

To begin with, we provide the notations used in this proof. We denote the set of all variables as $\{X_1, \cdots, X_d\}$, and the vector variable $\Trans{[X_1, \cdots, X_d]}$ is denoted as $\Mat{X}$ in this section. The set of parent nodes of $X_i$ is denoted as $Pa(X_i)$, while the index set of $Pa(X_i)$ is denoted as $Pa(i)$. 

First of all, we prove Lemma \ref{lemma: optimal_parameter_is_true_parameter} to show that our DICD could yield the true coefficient in linear systems under the true causal structure.

\begin{proof}
\label{proof_of_optimalparameter_is_trueparameter}
We consider the linear regression coefficient $\hat{\Mat{W}}^e$ for every environment $e$. The regression process in Equation (\ref{regression_result_for_e}) is equivalent to performing the regression for every node $X_m, m \in \{1,\cdots, d\}$ from its parents $Pa(X_m)$. 

We incorporate the environment information into the expectation operator $\mathbb{E}_e$. 
The coefficients of the node $X_m$ is denoted as:
\begin{equation}\label{regression_result_for_w}
    [\hat{\Mat{W}}^e]_m = arg\min_{\overline{\Mat{w}}} \mathbb{E}_e ||X_m - \Trans{(\overline{\Mat{w}} \circ [\Mat{S}_0]_m)} \Mat{X} ||,
\end{equation}
where $[\hat{\Mat{W}}^e]_m$ represents the $m$-th column of $[\hat{\Mat{W}}^e]$ and $[\Mat{S}_0]_m$ is the $m$-th column of $[\Mat{S}_0]$. 
Then we can further express $[\hat{\Mat{W}}^e]_m$ in Equation (\ref{regression_result_for_w}) as:
\begin{equation*}
     [\hat{\Mat{W}}^e]_m = \mathbb{E}_e[X_m \Big( ([\Mat{S}_0]_m\circ  \Mat{X}) \Trans{[(\Mat{S}_0]_m\circ \Mat{X})} \Big)^{-1} [\Mat{S}_0]_m\circ \Mat{X}].
\end{equation*}
For simplicity, we denote $[\Mat{S}_0]_m\circ \Mat{X}$ as $\Mat{X}_{\Mat{S}_0m}$. 

We extract the $m$-th column of $\Mat{W}_0$ as $[\Mat{W}_0]_m$, and the true generation process for $X_m$ could be expressed as:
\begin{equation}\label{generation_process_for_X_m_original}
    X_m = \Trans{[\Mat{W}_0]}_m \Mat{X} + z_m,
\end{equation}
where $z_m \sim \mathcal{N}(0, (\sigma_m^e)^2)$. 
Then as $[\Mat{W}_0]$ is the ground truth coefficients, $[\Mat{W}_0]$ should be consistent with the graph $G_0$, as well as the structure $\Mat{S}_0$. Thus we have: $\Mat{W}_0 = \Mat{S}_0 \circ \Mat{W}_0$, which indicates $\Trans{[\Mat{W}_0]}_m \Mat{X} = \Trans{ [\Mat{S}_0]_m \circ [\Mat{W}_0]}_m \Mat{X} = \Trans{[\Mat{W}_0]}_m \Mat{X}_{\Mat{S}_0m} $. Thus the generation process in Equation (\ref{generation_process_for_X_m_original}) could be rewritten as:
\begin{equation}\label{generation_process_for_X_m}
     X_m = \Trans{[\Mat{W}_0]}_m \Mat{X}_{\Mat{S}_0m} + z_m.
\end{equation}
Then we can substitute Equation (\ref{generation_process_for_X_m}) into Equation (\ref{regression_result_for_w}) to obtain:
\begin{equation*}
    [\hat{\Mat{W}}^e]_m = \mathbb{E}_e[ (\Trans{[\Mat{W}_0]_m} \Mat{X}_{\Mat{S}_0m} + z_m) (\Mat{X}_{\Mat{S}_0m}\Trans{(\Mat{X}_{\Mat{S}_0m})})^{-1} \Mat{X}_{\Mat{S}_0m}].
\end{equation*}
Since $z_m$ is an independent additive noise, we can remove $z_m$ in the above equation and yield:
\begin{align}
    [\hat{\Mat{W}}^e]_m &= \mathbb{E}_e[ (\Trans{[\Mat{w}_0]_m} \Mat{X}_{\Mat{S}_0m}) (\Mat{X}_{\Mat{S}_0m}\Trans{(\Mat{X}_{\Mat{S}_0m})})^{-1} \Mat{X}_{\Mat{S}_0m}] \nonumber \\
        &= \mathbb{E}_e[ (\Mat{X}_{\Mat{S}_0m}\Trans{(\Mat{X}_{\Mat{S}_0m})})^{-1} \Mat{X}_{\Mat{S}_0m}\Trans{\Mat{X}}_{\Mat{S}_0m}]\Trans{[\Mat{W}_0]}_m = \Trans{[\Mat{W}_0]}_m \label{regression_right}.
\end{align}
Note that $\Trans{[\Mat{W}_0]}_m \Mat{X}_{\Mat{S}_0m}$ is a scalar. 
We adopt the rule $(\Mat{a} \cdot \Mat{b}) \Mat{c} = (\Mat{c} \cdot \Mat{b}^T) \cdot \Mat{a} $, where $\Mat{a}, \Mat{b}, \Mat{c}$ are vectors of the same dimension in the second line. Moreover, since $\Trans{[\Mat{W}_0]}_m$ is independent of the environment $e$ and the variable vector $\Mat{X}$, it can be moved out of the expectation operation. 

Based on Equation (\ref{regression_right}), the obtained coefficients in environment $e$, $\Trans{[\hat{\Mat{W}}^e]}_m$, is exactly the true causal coefficients $\Trans{[\Mat{W}_0]}_m$. 
Since $m$ is any number in $\{1,\cdots, d\}$, we have $\hat{\Mat{W}}^e = \Mat{W}_0$. The lemma is proved. 
\end{proof}






With Definition \ref{definition:stable_graph}, we further denote the parents of the variable $X_i$ in $G_0$ as $Pa_0(X_i)$ and the corresponding indexes as $Pa_0(i)$. 

Lemma \ref{lemma:stable_graph} demonstrates that the causal directions between any two variables cannot violate each other in any stable graph and the true causal graph. And we give the proof as following: 

\begin{proof}
    To prove the lemma, we only need to show that given any $X_i\in \{X_1,\cdots,X_d\}$ without being the source node, if $\exists~ X_j$ such that:
    \begin{equation}\label{contradiction_condition}
        X_j \in Pa_s(X_i), \quad X_i \in Pre_0(X_j).
    \end{equation}
    Then $G_s$ cannot be a stable graph. 
    
    To achieve this, we need to incorporate the information of the environment $e$. Given $j$ satisfying Equation \eqref{contradiction_condition}, we aim to find $j_0$ such that:
    \begin{equation*}
        X_{j_0}\in Pa_s(X_i), \quad z_{j_0} \perp Pa_s(X_i) \backslash X_{j_0}.
    \end{equation*}
    Suppose there exists $j$ satisfying Equation (\ref{contradiction_condition}), we will find $j_0$ with the following process: \\
    (1) Set $j_0$ to be $j$; \\
    (2) If $\exists ~ X_k \in Pa_s(X_i)$ such that there exists the path from $X_{j_0}$ to $X_k$, then we set $j_0$ to be $k$. In this step, the condition $X_{i} \in Pre_0(X_{j_0})$ still holds. If there is no such $X_k \in Pa_s(X_i)$ satisfying the above condition, we will exit the iteration. 
    
    In this iteration, we always have $X_i \in Pre_0(X_{j_0})$. Also, with the above process, we know that $\forall ~ X_k \in Pa_s(X_i)\backslash\{X_{j_0}\}, X_{j_0} \notin Pre_0(X_{k})$, which means:
    \begin{equation}\label{independence_j0}
        z_{j_0}^e \perp X_k| \forall k \in Pa_s(i)\backslash\{j_0\}.
    \end{equation}
    
    Now since the result of the regression from $Pa_s(X_i)$ to $X_i$ is exactly the $m$-th column of $\hat{\Mat{W}}^e$, we denote the result of the regression based on Equation \eqref{regression} as:
    \begin{align*}
        [\hat{\Mat{W}}^e]_m &= arg\min_{\Trans{[w_1,\cdots,w_d]}} \mathbb{E}_e \parallel w_{j_0}X_{j_0} \\
        &+ \sum_{k\in Pa_s(m)\backslash\{j\}} w_k X_k - X_i \parallel_2^2, \nonumber \\
        &\st w_j = 0, \quad j \in \{1,\cdots, d\}\backslash Pa_s(m).
    \end{align*}
    Then the optimal value for $w_{j_0}$ is:
    \begin{equation}\label{formulation_of_optimal_w}
        \hat{w}_{j_0}^e = \frac{\mathbb{E}_e[X_iX_{j_0}] - \sum_{k\in Pa_s(i)\backslash\{j_0\}} \hat{w}^e_k \mathbb{E}_e[X_k X_{j_0}] }{\mathbb{E}_e[X_{j_0}^2]}.
    \end{equation}
    
    According to the conditions of theorem \ref{theorem:identifiability}, there exist two distinct environments $e_1, e_2\in \mathcal{E}$ such that $ Var({z_{j_0}^{e_1}}) \neq Var({z_{j_0}^{e_2}})$ and for any other node $X_i$, \ie $i \neq j_0$, we have
    $Var(z_i^{e_1}) = Var(z_i^{e_2})$.
    
    Hence in the right term in Equation (\ref{formulation_of_optimal_w}), with $X_i \perp z_{j_0}$ (according to the condition $X_i \in Pre_0(X_{j_0})$), and $\forall k \in Pa_s(i) \backslash \{j_0\}, z_{j_0}^e\perp X_k$ in Equation (\ref{independence_j0}), we could have:
    \begin{align*}
        \mathbb{E}_{e_1}[X_iX_{j_0}] &= \mathbb{E}_{e_2}[X_iX_{j_0}], \\ \quad \forall k \in Pa_s(X_i)\backslash\{j_0\}&, \mathbb{E}_{e_1}[X_k X_{j_0}] = \mathbb{E}_{e_1}[X_k X_{j_0}].
    \end{align*}
    
    If there exists $k\in Pa_s(i)\backslash\{j_0\}$ such that $\hat{w}_k^e$ is different across $e_1$ and $e_2$, then $G_s$ is already not stable, which raises the contradiction. Otherwise, if $\hat{w}_k^{e_1} = \hat{w}_k^{e_2}, \forall k\in Pa_s(i)\backslash\{j_0\}$, then since $E_{e_1}[X_{j_0}^2] \neq  E_{e_2}[X_{j_0}^2]$, we will have 
    \begin{equation}\label{eq:contradiction_from_wj0}
        \hat{w}_{j_0}^{e_1} \neq \hat{w}_{j_0}^{e_2},
    \end{equation}
    indicating $G_s$ is not a stable graph. The contradiction still exists. 
    
    Thus we can have the conclusion: If $G_s$ is a stable graph, then $\forall X_j \in Pa_s(X_i)$, we have $X_i\notin Pre_0(X_j)$. The lemma is proved. 
\end{proof}

Now we give the proof of Theorem \ref{theorem_of_justification}. Since it's written from Theorem \ref{theorem:identifiability} with Definition \ref{definition:stable_graph}, then Theorem \ref{theorem:identifiability} will also be proved. 
\begin{proof}

Note that in this case $\Mat{W}_0$ is the optimal parameters for $G_0$, and thus it is also the true coefficients for data generation. (as shown in Lemma \ref{lemma: optimal_parameter_is_true_parameter}.) 

To prove Theorem 4.1, we will split the regression loss in Equation \eqref{regression_result_for_e} to the specific loss value in every environment and every node except source nodes. 
In other words, we illustrate that for any environment $e \in \mathcal{E}$, $\mathcal{L}^e(\Mat{S}_0\circ \Mat{W}_0) \leq \mathcal{L}^e(\Mat{S}_s\circ \Mat{W}_s)$, which could directly lead to Equation (\ref{loss_comparison}). 
Hence, in the following part, with a slight abuse of notation, we will omit the subscript $e$. 
We will show that $\forall ~ X_i \in \Set{V}$, where $\Set{V}$ is the node set, the regression loss from $Pa(X_i)$ to $X_i$ in $G_s$ is larger or equal than in $G_0$, with the equaling condition holds when:
\begin{equation}\label{equal_condition}
    \textrm{i-th column($\Mat{W}_0$) = i-th column($\Mat{W}_s$)}. 
\end{equation}

Given any $X_i \in \Set{V}, i \in \{1,\cdots, d\}$, we denote all the parents of $X_i$ in $G_s, G_0$ as $Pa_s(X_i), Pa_0(X_i)$ and the corresponding indexes of its parents as $Pa_s(i), Pa_0(i)$, respectively. Then the true generation process for the variable $X_i$ could be denoted as: 
\begin{equation}\label{generation_process_for_X_i}
    X_i = \sum_{k\in Pa_0(i)}[\Mat{W}_0]_{ki} X_k + z_i; \quad z_i \sim \mathcal{N}(0, \sigma_i^2).
\end{equation}

When performing regression from $Pa_s(X_i)$ to $X_i$, we need to constrain $\mathcal{G}(\Mat{W}_s) = G_s$, which means
\begin{equation*}
[\Mat{W}_s]_{ki} = 0 | \forall k \in \{1,\cdots, d\}\backslash Pa_s(i).
\end{equation*}
We denote the minimal loss and the optimal elements as
\begin{align}\label{minimal_loss_definition}
    L_i^s = \min_{\{\overline{w}_k| k\in Pa_s(i)\}} & \mathbb{E} \parallel X_i - \sum_{k\in Pa_s(i)} \overline{w}_k X_k \parallel_2^2, \\
    \{w_k^*| k\in Pa_s(i)\} &= arg\min_{\{\overline{w}_k| k\in Pa_s(i)\}} \mathbb{E} \parallel X_i \nonumber \\
    &- \sum_{k\in Pa_s(i)} \overline{w}_k X_k \parallel_2^2, \nonumber
\end{align}
where $\{w_k^*| k\in Pa_s(i)\}$ is a set and its elements are exactly the corresponding elements of $[\Mat{W}_s]_i$, \ie the non-zero elements of $i$-th column of $\Mat{W}_s$. 

Then we define three sets of indexes as follows:
\begin{align*}
    \mathcal{I}_u = Pa_0(i)\cap Pa_s(i), \\
    \quad \mathcal{I}_v  = Pa_0(i)\backslash Pa_s(i), \\
    \quad \mathcal{I}_k  = Pa_s(i)\backslash Pa_0(i).
\end{align*}
Then after substituting Equation (\ref{generation_process_for_X_i}) into Equation (\ref{minimal_loss_definition}), we have: 
\begin{align}
    L_i^s&= \min_{\{\overline{w}_k| k\in Pa_s(i)\}} \mathbb{E} \parallel \sum_{u\in \mathcal{I}_u} ([\Mat{W}_0]_{ui} - \overline{w}_u )X_u \nonumber \\
    &+ \sum_{v\in \Set{I}_v} [\Mat{W}_0]_{vi} X_v + \sum_{k\in \Set{I}_k} \overline{w}_k X_k + z_i \parallel_2^2. \nonumber
\end{align}
According to Lemma \ref{lemma: optimal_parameter_is_true_parameter}, $\forall~ X_j \in Pa_s(X_i)$, we have $ X_i \notin Pre_0(X_j)$, which means $X_j \perp z_i$. Thus we have $z_i\perp Pa_s(X_i)$. By definition, we have $z_i \perp Pa_0(X_i)$, which yield $z_i \perp \{Pa_s(X_i) \cup Pa_0(X_i)\} $. Then we can rewrite the above equation as:
\begin{align}
    L_i^s&= \min_{\{\overline{w}_k| k\in Pa_s(i)\}} \mathbb{E} \parallel \sum_{u\in \Set{I}_u} ([\Mat{W}_0]_{ui} - \overline{w}_u )X_u \nonumber \\
    &+ \sum_{v\in \Set{I}_v} [\Mat{W}_0]_{vi} X_v + \sum_{k\in \Set{I}_k} \overline{w}_k X_k \parallel_2^2 + \sigma_i^2. \nonumber
\end{align}
Since the reconstruction loss for $X_i$ in the true graph $G_0$, denoted as $L_i^0$ is exactly the variance of the additive noise added on $X_i$, i.e., $\sigma_i^2$. Thus here we already have $L_i^s\geq L_i^0 = \sigma_i^2$. In the following part, We will explore the conditions that make the equation hold exactly.

Note that $\forall j\in Pa_0(i)$, we have $[\Mat{W}_0]_{ji} \neq 0$. Then for all the nodes in $\Set{I}_u \cup \Set{I}_v \cup \Set{I}_k$, there must exists a node denoted as $X_{end} \in  \Set{I}_u \cup \Set{I}_v \cup \Set{I}_k$ that has no successors in $\Set{I}_u \cup \Set{I}_v \cup \Set{I}_k$ according to $G_0$. This means the noises corresponding to $X_{end}$ will be independent from all the other nodes in $\Set{I}_u \cup \Set{I}_v \cup \Set{I}_k$. 
Then we discuss three situations for $X_{end}$ to be in $\Set{I}_u$, $\Set{I}_v$, and $\Set{I}_k$, respectively:
\begin{itemize}[leftmargin=*]
    \item If $X_{end} \in \Set{I}_u$, we denote the index for $X_{end}$ as $u_1$. Then we have: 
    \begin{align*}
        L_i^s&- L_i^0 \geq \min_{\{\overline{w}_k| k\in Pa_s(i)\}} \mathbb{E} \parallel \sum_{u\in \Set{I}_u\backslash\{u_1\}} ([\Mat{W}_0]_{ui} - \overline{w}_u )X_u \\
        &+ \sum_{v\in \Set{I}_v} [\Mat{W}_0]_{vi} X_v + \sum_{k\in \Set{I}_k} \overline{w}_k X_k \parallel_2^2 +  ([\Mat{W}_0]_{u_1i} - \overline{w}_{u_1})^2 \sigma_{u_1}^2.
    \end{align*}
    Then the minimum operation $\displaystyle \min_{\{\overline{w}_k| k\in Pa_s(i)\}}$ in Equation \eqref{minimal_loss_definition} will directly set the optimal value of $\overline{w}_{u_1}$, \ie ${w}^*_{u_1}$ to be $[\Mat{W}_0]_{u_1i}$. 
\item If $X_{end} \in \Set{I}_v$, we denote the index for $X_{end}$ as $v_1$. Then we have: \begin{align*}
        L_i^s& - L_i^0 \geq \min_{\{\overline{w}_k| k\in Pa_s(i)\}} \mathbb{E} \parallel \sum_{u\in \Set{I}_u} ([\Mat{W}_0]_{ui} - \overline{w}_u )X_u \\
        &+ \sum_{v\in \Set{I}_v\backslash\{v_1\}} [\Mat{W}_0]_{vi} X_v +  \sum_{k\in \Set{I}_k} \overline{w}_k X_k \parallel_2^2  +  [\Mat{W}_0]_{v_1i}^2 \sigma_{v_1}^2.
    \end{align*}
    In this case, we will have the strict inequality $L_i^s> L_i^0$, which means the reconstruction loss for $G_s$ will be strictly larger than $G_0$. 
    \item If $X_{end} \in \Set{I}_k$, we denote the index for $X_{end}$ as $k_1$. Then we have: \begin{align*}
        L_i^s&- L_i^0 \geq \min_{\{\overline{w}_k| k\in Pa_s(i)\}} \mathbb{E} \parallel \sum_{u\in \Set{I}_u} ([\Mat{W}_0]_{ui} - \overline{w}_u )X_u \\
        &+ \sum_{v\in \Set{I}_v} [\Mat{W}_0]_{vi} X_v + \sum_{k\in \Set{I}_k\backslash\{k_1\}} \overline{w}_k X_k \parallel_2^2 +  \overline{w}_{k_1}^2 \sigma_{k_1}^2.
    \end{align*}
    Then the minimum operation $\displaystyle \min_{\{\overline{w}_k| k\in Pa_s(i)\}}$ in Equation \eqref{minimal_loss_definition} will directly set the optimal value of $\overline{w}_{k_1}$, \ie ${w}^*_{k_1}$ to be 0. 
\end{itemize}

If $X_{end} \in \Set{I}_v$, then we can already have $L_i^s> L_i^0$. For the other two situations, we will remove $X_{end} \in \Set{I}_u $ or $\Set{I}_k$ to form the new set $\Set{I}_u'$ and $\Set{I}_k'$, then find the new $X_{end}$ in $\Set{I}_u'\cup \Set{I}_v \cup \Set{I}_k'$ to perform the above process again. According to this discussion, we could easily give the necessary and sufficient conditions for $L_i^s$ to be equal to $L_i^0$: 
\begin{itemize}
    \item $\Set{I}_v = \emptyset$. If not, then there must be one step of removing the current $X_{end}$ such that $X_{end} \in \Set{I}_v$, which will directly lead to $L_i^s> L_i^0$. 
    \item $\forall u\in \Set{I}_u, {w}^*_u = [\Mat{W}_0]_{ui}$.
    \item $\forall k\in \Set{I}_k, {w}^*_k = 0$. 
\end{itemize}
Now we have the optimal values: $\{w_u| u\in \Set{I}_u\}$ and $\{w_k| k \in \Set{I}_k\}$.
Besides, according to the definition of $\Set{I}_u$ and $\Set{I}_k$, and $\Set{I}_v=\emptyset$, we have:  $\Set{I}_u \cup \Set{I}_k = Pa_s(j)$ and $\Set{I}_u = Pa_0(i)$. Since $\Set{I}_k \cap Pa_0(i) = \emptyset$, we have: $\forall k\in \Set{I}_k, [\Mat{W}_0]_{ki} = 0$. 

Since the structure of $\Mat{W}_s$ and $\Mat{W}_0$ need to be constrained to be the same as $G_s$ and $G_0$, respectively, we must have:
\begin{equation*}
    [\Mat{W}_0]_{ji} = [\Mat{W}_s]_{ji} = 0, \quad \forall j\in \{1,\cdots,d\} \backslash (\Set{I}_u \cup \Set{I}_k).
\end{equation*}
Then according to the above conditions, we have:
\begin{equation*}
    \textrm{i-th column($\Mat{W}_0$) = i-th column($\Mat{W}_s$)}. 
\end{equation*}
which is Equation (\ref{equal_condition}).
Since $X_i$ is any variable except source node in $\Set{V}$, we could have $\Mat{W}_0 = \Mat{W}_s$. (Note that for the columns corresponding to the source node, all the elements in these columns should be zero such that $\mathcal{G}(\Mat{W}_s) = G_s$ and $\mathcal{G}(\Mat{W}_0) = G_0$). 

We conclude the proof of Theorem \ref{theorem_of_justification}.

\end{proof}

\end{document}